\documentclass{article}

\usepackage{microtype}
\usepackage{graphicx}
\usepackage{subfigure}
\usepackage{booktabs} % for professional tables

\usepackage{multirow} %Added by Stefan

\usepackage{hyperref}

\usepackage[accepted]{icml2026}

\usepackage{amsmath}
\usepackage{amssymb}
\usepackage{mathtools}
\usepackage{amsthm}

\usepackage{silence}
\usepackage[capitalize,noabbrev]{cleveref}

\usepackage{placeins} %Stefan added for /FloatBarrier

\theoremstyle{plain}
\newtheorem{theorem}{Theorem}[section]

\newtheorem{lemma}[theorem]{Lemma}
\newtheorem{corollary}[theorem]{Corollary}
\theoremstyle{definition}

\newtheorem{observation}[theorem]{Observation}
\newtheorem{problem}[theorem]{Problem} %Added by Stefan
\theoremstyle{remark}

\newenvironment{restatetheorem}[1]{%
  \noindent\textbf{(Main Text) Theorem~\ref{#1}.}\quad
}{\par}

\newenvironment{restatelemma}[1]{%
  \noindent\textbf{(Main Text) Lemma~\ref{#1}.}\quad
}{\par}

\def\mc#1{\mathcal{#1}}

\def\*#1{\mathbf{#1}}

\newcommand{\ste}{}
\usepackage[textsize=tiny]{todonotes}

\icmltitlerunning{Generative Modeling of Discrete Latent Structures via Dynamic Policy Gradients}

\begin{document}

\twocolumn[
\icmltitle{Generative Modeling of Discrete Latent Structures via Dynamic Policy Gradients}

% It is OKAY to include author information, even for blind
% submissions: the style file will automatically remove it for you
% unless you've provided the [accepted] option to the icml2025
% package.

% List of affiliations: The first argument should be a (short)
% identifier you will use later to specify author affiliations
% Academic affiliations should list Department, University, City, Region, Country
% Industry affiliations should list Company, City, Region, Country

% You can specify symbols, otherwise they are numbered in order.
% Ideally, you should not use this facility. Affiliations will be numbered
% in order of appearance and this is the preferred way.
\icmlsetsymbol{equal}{*}

\begin{icmlauthorlist}
\icmlauthor{Stefan Ivanovic}{cs}
\icmlauthor{Ge Liu}{cs}
\icmlauthor{Mohammed El-Kebir}{cs,yyy}
\end{icmlauthorlist}

\icmlaffiliation{cs}{Siebel School of Computing and Data Science, University of Illinois at Urbana-Champaign, IL 61801, USA}
\icmlaffiliation{yyy}{Cancer Center at Illinois, University of Illinois Urbana-Champaign, IL 61801, USA}

\icmlcorrespondingauthor{Mohammed El-Kebir}{melkebir@illinois.edu}

% You may provide any keywords that you
% find helpful for describing your paper; these are used to populate
% the "keywords" metadata in the PDF but will not be shown in the document
\icmlkeywords{Machine Learning, ICML}

\vskip 0.3in
]

% this must go after the closing bracket ] following \twocolumn[ ...

% This command actually creates the footnote in the first column
% listing the affiliations and the copyright notice.
% The command takes one argument, which is text to display at the start of the footnote.
% The \icmlEqualContribution command is standard text for equal contribution.
% Remove it (just {}) if you do not need this facility.

%\printAffiliationsAndNotice{}  % leave blank if no need to mention equal contribution
%\printAffiliationsAndNotice{\icmlEqualContribution} % otherwise use the standard text.
\printAffiliationsAndNotice{}

\begin{abstract}

%Abstracts must be a single paragraph, ideally between %4--6 sentences long.
%Gross violations will trigger corrections at the camera-ready phase.

Many scientific problems require inferring unobserved mechanistic latent states from indirect observations. 
While classical approaches, including expectation maximization, do not scale to combinatorially large spaces, deep learning approaches such as variational autoencoders typically form artificial latent states rather than reconstructing the mechanistic ground-truth states.
Here, we introduce GReinSS, a policy learning framework that uses dynamically rescaled rewards to learn latent state distributions that maximize the observed data likelihood. %"generative inference framework"? 
We show that GReinSS accurately reconstructs simulated latent sets and latent graphs, outperforming alternative policy learning and generative modeling baselines. 
Additionally, GReinSS reconstructs isoforms from real short-read RNA sequencing data that better match isoforms detected by orthogonal long-read sequencing than the standard RSEM algorithm.
Overall, GReinSS is a principled and practically effective approach for generative modeling and inference of combinatorial latent states from indirect observations. 
\end{abstract}

% Space of observations --- $\mathcal{X}$

\section{Introduction}

% \minline{Talk about 'Indirect Observation'. No access to $S$ rather something generated from $S$.}
% \minline{Classical approaches are not as flexible }

Learning latent states that explain observed data is a central goal of machine learning and probabilistic modeling.
% Learning latent states that explain observed data is a central goal of machine learning and probabilistic modeling. 
% Many problems, including clustering~\cite{macqueen1965some}, topic modeling~\cite{hofmann1999probabilistic}, and representation learning~\cite{bengio2013representation} can be viewed as estimating latent states $\hat{S}_1, \ldots \hat{S}_N$ that give rise to observations $X_1, \ldots, X_N$. %\meknote{Use uppercase $S$ to indicate random variable.}
General-purpose unsupervised approaches such as clustering~\cite{macqueen1965some}, 
topic modeling~\cite{hofmann1999probabilistic}, and representation learning~\cite{bengio2013representation} find \emph{artificial latent states} that represent the variation in observed data $X_i$ without attempting to match some unknown true state $S^*_i$.
On the other hand, practical and/or scientific problems often require inferring \emph{mechanistic latent states} approximating unknown true states that are often combinatorial in nature, such as chemical reaction pathways~\cite{ji2021autonomous}, transportation networks~\cite{biagioni2012inferring}, evolutionary trees~\cite{felsenstein1981evolutionary}, molecular conformations~\cite{noe2013projected}, gene regulatory networks~\cite{friedman2000using}, etc.
%[INDIRECT OBSERVATIONS --- emphasize it]
% Such problems rely on \emph{indirect observations} $X_i$ rather than utilizing supervised learning on a known set of latent states. 
% Moreover, the statistical process $\Pr(X \mid S)$ linking each indirect observation $X_i$ to each latent state $S_i$ is at least partially understood, rather than relying purely on unsupervised reconstruction of the observations $X_i$. 
Importantly, we do not observe the latent states $S_i$ and are instead given \emph{indirect observations} $X_i$. 
These problems are self-supervised in the sense that the statistical process $\Pr(X \mid S)$ linking each indirect observation $X_i$ to each latent state $S_i$ is at least partially understood, rather than relying purely on unsupervised reconstruction of the observations $X_i$. 
%In such settings, the statistical process $\Pr(X \mid S)$ linking a latent state $S$ to an observation $X$ is at least partially understood, but the primary challenge is inferring the distribution $\Pr(S \mid \theta)$ of latent states $S$ that best fit the data $X_1,\ldots,X_N$.
% Additionally, while modern generative models such as variational autoencoders typically optimize continuous latent representations via gradient descent, many practical domains involve discrete latent structures such as graphs, sequences, sets, etc.
Modeling mechanistic latent states can be formulated as optimizing a generative model $\Pr(S \mid \theta)$ with parameters $\theta$ that maximize the probability $\prod_i \sum_{S} \Pr(X_i \mid S) \Pr(S \mid \theta)$ of our observations $X_i$.  
The resulting inference problem is often solved using classical approaches such as expectation maximization~\cite{dempster1977maximum}. % or variational inference~\cite{wainwright2008graphical}. %. Removed variational inference for now 
However, classical approaches often struggle to scale to combinatorially large latent state spaces.

Reinforcement learning (RL) naturally fits combinatorial problems by sequentially generating their underlying structured latent states. 
%While RL has been used for goals including maximizing expected rewards~\cite{sutton1998reinforcement}, maximizing entropy~\cite{haarnoja2017reinforcement}, and matching trajectory distributions to reward distributions~\cite{malkin2022trajectory}, it has not yet been extended to latent state inference from indirect observation data. 
While RL has been used to maximize expected rewards~\cite{sutton1998reinforcement}, maximize entropy~\cite{haarnoja2017reinforcement}, and match trajectory distributions to reward distributions~\cite{malkin2022trajectory}, and policy gradients have been used within variational inference to optimize surrogate objectives~\cite{mnih2016variational, mnih2014neural}, these approaches do not directly optimize the marginal likelihood of indirect observations. %related work has applied policy gradients within variational inference

%Classical approaches, including expectation maximization~\cite{dempster1977maximum} and variational inference~\cite{wainwright2008graphical}, \mek{utilize a (potentially parameterized) generative model of observations $X$ given latent variables $S$ allowing one to compute $\Pr(X \mid S)$}.
% We note while modern generative models such as variational autoencoders typically optimize continuous latent representations via gradient descent, many practical domains involve discrete latent structures such as graphs, sequences, sets, etc.

%By contrast, modern generative modeling approaches typically aim to learn latent representations that explain observed data without assuming there exist ground-truth latent variables  $S^*_1, \ldots S^*_N$ that must be reconstructed.
%For instance, instead of directly estimating latent variables, variational autoencoders~\cite{kingma2013auto} learn continuous latent representations that enable the reconstruction of observations $X_1, \ldots, X_N$. 
% \meknote{is $X$ really 'an observation' or is it a `potential observation' from the space $\mc{X}$}

%\minline{Rewrite previous sentence to mention a gap --- not possible using existing approaches}

\textbf{Our contribution:}
%Generative modeling via reinforcement learning for structured states 
%Idea "REINFORCE" PG 
Here, we introduce \underline{G}enerative \underline{Rein}forcement Learning of \underline{S}tructured \underline{S}tates (GReinSS), a framework for optimizing the distribution $\Pr(S \mid \theta)$ of states $S$ to maximize the overall probability $\prod_i \Pr(X_i \mid \theta)$ of observations $X_i$. %Here, we introduce a new RL framework, \underline{G}enerative \underline{Rein}forcement Learning using \underline{S}elf-\underline{S}upervision (GReinSS), for optimizing the distribution $\Pr(S \mid \theta)$ of states $S$ to maximize the overall probability $\prod_i \Pr(X_i \mid \theta)$ of observations $X_i$.
GReinSS uses policy gradients with dynamically scaled rewards to ensure that the policy's parameters $\theta$ are updated in the direction of maximizing the probability $\prod_i \Pr(X_i \mid \theta)$ of the observations $X_i$. 
In contrast to standard RL formulations, where policy gradients optimize expected return under a stationary reward function, GReinSS uses RL machinery as an optimization tool for probabilistic modeling over discrete latent states. 
% Although adaptive or dynamically normalized rewards are common in policy learning, they are usually introduced heuristically~\cite{ibrahim2024comprehensive, yuan2023automatic}.
% Instead, our reward is derived explicitly to ensure the policy gradient gives an unbiased estimator of the log-likelihood objective (\cref{thm:opt}). %\meknote{Should we mention the precondition?}
%GReinSS's self-supervised rewards are entirely based on observed data $X$ due to the absence of supervised signals from known ground truth states. 

\begin{figure}[t]
% \vskip 0.2in
\begin{center}
\centerline{\includegraphics[width=\columnwidth]{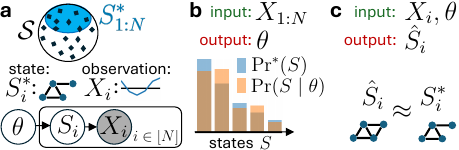}}
\caption{
\textbf{Overview of GReinSS.}
\textbf{a}~Many scientific problems consist of estimating latent states $S^*_1, \ldots S^*_N \in \mc{S}$ only given indirect observations $X_{1:N} = X_1,\ldots,X_N$.
These latent states are sampled from an unobserved underlying distribution $\Pr^*(S)$, which we model as $\Pr(S\mid\theta)$ using parameters $\theta$.
% In many scientific problems, there exists a distribution $\Pr^*(S)$  of discrete latent states over a set $\mc{S}$ as well as $N$ items sampled from this distribution. 
% Rather than directly observing the true latent states $S^*_{1:N}=S^*_1,\ldots,S^*_N$, we are only given indirect observations $X_{1:N} = X_1,\ldots,X_N$.
%To do so, we model $\Pr^*(S)$ as $\Pr(S\mid\theta)$ via parameters $\theta$.
This yields two problems.
\textbf{b} First, a learning problem of identifying parameters $\theta$ that maximize the data likelihood $\Pr(X_{1:N} \mid \theta)$.
\textbf{c} Second, an inference problem of estimating $S^*_i$ with $\hat{S}_i = \text{argmax} \Pr(S \mid X_i) \Pr(S \mid \theta)$ given $X_i$ and $\theta$. 
GReinSS solves both problems using policy gradients with dynamic rewards. 
% A plate diagram of generating latent states and observations 
% \textbf{b}~Problem~\ref{prob:dist}, solving for parameters $\theta$ given observations $X_{1:N}$ with the aim of approximating $\Pr^*(S)$ with $\Pr(S \mid \theta)$. 
% \textbf{c}~Problem~\ref{prob:pred}, estimating $S^*_i$ with $\hat{S}_i = \text{argmax} \Pr(S \mid X_i) \Pr(S \mid \theta)$ given $X_i$ and $\theta$. 
%\mek{[(a) include text `state $S^*_i$` and `observation $X_i$'; (b) \ldots]}
}
\label{fig:overview}
\end{center}
\vskip -0.4in
\end{figure}

We compare GReinSS with existing approaches on both simulated latent graph inference, simulated latent set reconstruction, and the real biological problem of RNA isoform reconstruction from short-read sequencing data. 
On simulations, GReinSS reliably reconstructs ground-truth latent states with higher accuracy than standard policy gradients, GFlowNets, local search, and generalized expectation maximization implemented with variational autoencoders, diffusion models, and autoregressive models. 
On the real short-read RNA sequencing data from GTEx~\cite{lonsdale2013genotype}, GReinSS outperforms the standard RSEM~\cite{li2011rsem} method used by GTEx in terms of predicting gene isoforms validated by additional long-read RNA sequencing. 
These results demonstrate that GReinSS successfully extends modern generative modeling to the inference of discrete latent states from indirect observation data through a policy learning formulation.

% These results highlight how policy-learning-based generative modeling can extend likelihood-based inference to domains involving discrete latent structures with partially known observation models.

%\minline{Add related works section}
\subsection{Related Works}
\label{sec:related}

%\minline{We need to mention HMMs}

%\minline{Make connection of GReinSS with RL and iRL.}

%\minline{start by describing local search and state disadvantage}
There are several related works aimed at explaining observation data $X_1, \ldots, X_N$ through latent states $S$.
The simplest approach for inferring a latent state $S$ from an observation $X_i$ given the probability function $\Pr(X_i \mid S)$ is directly optimizing $\text{argmax}_S \Pr(X_i \mid S)$ via local search. 
%Thus, we include a local search baseline that optimizes $\text{argmax}_S \Pr(X_i \mid S)$ for each observation $X_i$ in isolation. % with no shared model of latent states $\Pr( S \mid \theta)$. 
This, however, does not leverage information across observations $X_i$ through a shared model $\Pr(S \mid \theta)$ of latent state generation. 

Variational autoencoders effectively learn a distribution over artificial latent states that maximizes the probability of generating observation data $X_1,\ldots,X_N$~\cite{kingma2013auto}. 
However, these artificial latent states belong to a completely separate vector space distinct from the mechanistic latent states that underlie the scientific problem. % at hand.
% However, these artificial latent states belong to a completely separate state space, disconnected from latent states that 
% However, these artificial latent states do not belong to the set of possible mechanistic latent states, and do not attempt to match some... 
% However, these artificial latent states are not optimized to match some ground-truth distribution $\Pr^*(S)$ of mechanistic latent states. 
% utilizing a shared model of latent state generation $\Pr(S \mid \theta)$ is highly valuable for leveraging information across observations. 

The classical approaches of expectation maximization (EM) and generalized expectation maximization (GEM) for inferring problem-specific mechanistic latent states/variables $S$ operate by alternating between inference over latent variables $S$ given parameters $\theta$ (E-step) followed by updating parameters $\theta$ given latent variables $S$ (M-step)~\cite{dempster1977maximum, wu1983convergence}.
This can be highly effective but requires the ability to compute the expectation value $\sum_{i=1}^N \mathbb{E}_{ S_i \sim \Pr(S \mid X_i, \theta^{(t)}  )} [  \log(  \Pr(X_i, S_i \mid \theta^{(t+1)} )) ]$ of the complete-data log-likelihood.
In settings where the latent state space is exponentially large, calculating this expectation is generally intractable.
A classical exception is hidden Markov models, where the exponentially large state space has a Markov structure that allows exact expectation computation via dynamic programming~\cite{rabiner1989tutorial}. 
In \cref{sec:baselines}, we describe how GEM can be adapted to use modern machine learning methods, including variational autoencoders, autoregressive models, and discrete diffusion, to approximately compute and optimize this expectation value on general exponentially large state spaces. %, and solve the M-step with modern machine learning methods such as variational autoencoders, autoregressive models, and discrete diffusion. 
More recently, variational inference (VI)~\cite{wainwright2008graphical} has emerged as a principled framework for learning probabilistic models on exponentially large latent spaces.
Rather than directly optimizing the marginal likelihood $\Pr(X_{1:N} \mid \theta)$ using a known distribution $\Pr(X \mid S)$, VI solves a related problem in which the objective is replaced by an ELBO surrogate objective that includes an explicit variational posterior model $q_\phi(S \mid X)$ parameterized by $\phi$.

Beyond EM and VI, reinforcement learning has been widely used for generating structured data using discrete action spaces~\cite{yu2017seqgan}.
For instance, GFlowNets are used to learn policies that generate states $S$ with probabilities $\Pr(S \mid \theta)$ proportional to their reward~\cite{malkin2022trajectory}. 
However, this method assumes known rewards for terminal states $S$ rather than optimizing the distribution of terminal states to maximize the probability $\Pr(X_{1},\ldots,X_N \mid \theta)$ of indirect observation data. 
Many approaches have used adaptive, normalized, or shaped reward functions; however, these are typically used to encourage exploration or stabilize training rather than to match some latent distribution of states~\cite{ibrahim2024comprehensive, yuan2023automatic}. 
%GReinSS utilizes adaptive rewards in order to learn a probability distribution over discrete states that maximizes the log likelihood of the observation data. 
GReinSS instead constructs adaptive rewards that enable policy gradients to learn a distribution $\Pr(S \mid \theta)$ over latent states $S$ that maximizes the likelihood $\Pr(X_{1},\ldots,X_{N} \mid \theta)$ of indirect observation data $X_1,\ldots,X_N$. 
While the aforementioned methods do not generally solve the same problem as GReinSS, we identify several special cases where they do (\cref{sec:baselines}). 
Moreover, we use these methods as baselines in our evaluation (\cref{sec:sim}).

% only GReinSS is directly designed for solving Problems \ref{prob:dist} and \ref{prob:pred}, there exist many existing works that can be utilized as baselines. 

%\subsection{Prior applications}
%\minline{Fold this into Related Works, at the end.}

% \paragraph{Prior applications:}
In addition to related general methods, two previous papers have utilized the underlying technique of GReinSS without generalizing to describe the full GReinSS procedure~\cite{ivanovic2023modeling, ivanovic2025cnrein} (see \cref{app:CNRein} for details).  
% CloMu~\cite{ivanovic2023modeling} is a specific application in which the states are phylogeny trees of SNV mutations in cancer, and the observations are noisy, uncertain measurements of the phylogeny from DNA sequencing data. 
% For CNRein~\cite{ivanovic2025cnrein}, the states are sets of CNV mutations in individual cells in a cancer patient, and observations are noisy DNA sequencing measurements of each cell.
We build upon these early applications and (i) formulate general probabilistic learning and inference problems on arbitrary discrete structures (\cref{sec:problem}); (ii) provide a theoretically grounded solution to these problems by establishing a connection between policy-gradient training and maximum likelihood estimation in discrete combinatorial spaces (\cref{sec:GReinSS_method}); (iii) formulate a theory for optimal off-policy sampling (\cref{sec:off-policy}); (iv) relate existing ML approaches to GReinSS (\cref{sec:baselines}); and (v) provide ablations and comparisons with existing ML methods (\cref{sec:results}).

\paragraph{Conflict of Interest Disclosure:} None.

\section{Problem Definition}
\label{sec:problem}
% \minline{Replace `terminal state' by `latent state'.}

% \begin{itemize}
%     \item Let there be an unknown ground-truth distribution $\Pr^*(S)$ of latent states of the true population of discrete objects.
%     \item We are given a sample of $N$ items from these distribution, with latent states are $S^*_1, \ldots, S^*_N$.
%     \item Importantly, these latent states are unobserved; instead we have indirect observations $X_1,\ldots,X_N$.
%     \item In addition, we are also given the ground-truth distribution $\Pr^*(X \mid S)$ of observation data $X$ given a latent state $S$.
%     % \item Importantly, we do not have access to $\Pr^*(S)$ nor access to $S^*_1, \ldots, S^*_N$.
%     % \item We attempt to model $\Pr^*(S)$ by introducing parameters $\theta$ such that $\Pr^*(S) \approx $
%     \item We approximate $\Pr^*(S)$ via a generative model $\Pr(S \mid \theta)$ of latent states $S$ with parameters $\theta$ that maximize the data likelihood $\Pr(X_1,\ldots,X_N \mid \theta)$. 
%     \item We approximate $S^*_i$ with $\hat{S}_i = \text{argmax}_{S} \Pr(X_i \mid S) \Pr(S \mid \theta)$. 
% \end{itemize}

Let $\Pr^*(S)$ be the ground-truth distribution on the set $\mc{S}$ of latent states $S$, which can represent graphs, sequences, sets, positions in space, or any discrete data structure that can be generated via policy learning. %discrete => combinatorial 
We are given a sample of $N$ items from this distribution.
Importantly, we do not directly observe the latent states $S^*_{1:N} = S^*_1, \ldots, S^*_N$; instead we are only given indirect observations $X_{1:N}=X_1,\ldots,X_N$.
In addition, we are given the ability to compute the probability $\Pr(X \mid S)$ of any observation $X$ given any latent state $S$.
As such, our observations $X_{1:N}$ are generated from $\Pr(X \mid S^*_1), \ldots, \Pr(X \mid S^*_N)$.
We aim to approximate $\Pr^*(S)$ via a generative model $\Pr(S \mid \theta)$ of latent states $S$ with parameters $\theta$ (\cref{fig:overview}a).
We note that we place no restrictions on the generative model, which can be a variational autoencoder, discrete diffusion, autoregressive generation, policy-based generation, etc.
% \meknote{Might be good to say that this approximation parameterized by $\theta$ can be anything, a neural net, a variational auto encoder, \ldots}
Therefore, the probability of an observation $X$ given parameters $\theta$ is 
\begin{equation}
\label{eq:X_given_theta}
\Pr(X \mid \theta) = \sum_{S \in \mathcal{S}} \Pr(X \mid S) \Pr(S \mid \theta).
\end{equation}
The overall probability $\Pr(X_{1:N} \mid \theta)$ of the observations $X_{1:N}$ given our model parameters $\theta$ thus equals $\prod_{i=1}^{N} \sum_{S \in \mathcal{S}} \Pr(X_i \mid S) \Pr(S \mid \theta)$.
% \begin{equation}
% \prod_{i=1}^{N} \Pr(X_i \mid \theta)
% \Pr(X_{1:N} \mid \theta) = \prod_{i=1}^{N} \sum_{S \in \mathcal{S}} \Pr(X_i \mid S) \Pr(S \mid \theta).    
% \end{equation}
We pose the following learning problem to train $\Pr(S \mid \theta)$ towards matching $\Pr^*(S)$ without directly observing $\Pr^*(S)$ (\cref{fig:overview}b). 

\begin{problem}[\textsc{Latent State Modeling from Indirect Observations}]
\label{prob:dist}
Given indirect observations $X_{1:N}$ and the probability function $\Pr(X \mid S)$, find model parameters $\theta$ that maximize the probability $\Pr(X_{1:N} \mid \theta)$ of the observations. 
% This yields an approximation of the latent state distribution $\Pr^*(S)$. 
\end{problem}

% The $N$ ground-truth latent states $S^*_1, \ldots, S^*_N$ are generated from this distribution. 
% These latent states can represent graphs, sequences, sets, positions in space, or any discrete structure that can be generated via policy learning.
% In addition, let $\Pr(X \mid S)$ be the distribution of observation data $X$ given a latent state $S$. %Got rid of $\Pr*(X|S), \Pr*(X|S) distinction$
% As such, our $N$ observations $X_1, \ldots, X_N$ are generated from $\Pr(X \mid S^*_1), \ldots , \Pr(X \mid S^*_N)$. 
% As shorthand, we write $X_1, \ldots, X_N$ as $X_{1:N}$. 
% %Let $\Pr(X \mid S)$ be some known approximation of $\Pr^*(X \mid S)$. %\meknote{Not a fan of lowercase $s$}
% Let $\Pr(S \mid \theta)$ be the probability of the latent state $S$ given some model with parameters $\theta$. 
% The probability of an observation $X_i$ given our model parameters $\theta$ is 
% \begin{equation}
% \Pr(X_i \mid \theta) = \sum_{s \in \mathcal{S}} \Pr(X_i \mid s) \Pr(s \mid \theta)   %\mathbb{E}_{S \sim \Pr(S \mid \theta)}[ \Pr(X_i \mid S)].
% \end{equation}
% where $\mathcal{S}$ is the set of possible latent states.  
%  The overall probability of the observations $X_{1:N}$ given our model parameters $\theta$ is
% \begin{equation}
% \Pr(X_{1:N} \mid \theta) =  \prod_{i=1}^{N} \Pr(X_i \mid \theta).    
% \end{equation}

% $\prod_{i=1}^N \sum_{S \in \mathcal{S}} \Pr(X_i \mid S) \Pr(S \mid \theta)$

% The following problem allows us to train $\Pr(S \mid \theta)$ towards matching $\Pr^*(S)$ without directly observing $\Pr^*(S)$. 

Next, we pose the following inference problem to estimate the unobserved ground-truth latent states $S^*_1, \ldots, S^*_N$ using the learned parameters $\theta$ (\cref{fig:overview}c). % learned from solving the previous problem. %, we next estimate the unobserved ground-truth latent states $S^*_1, \ldots, S^*_N$.
% noting that $\text{argmax}_{S\in\mc{S}} \Pr(S \mid X_i, \theta) = \text{argmax}_{S\in\mc{S}} \Pr(X_i \mid S) \Pr(S \mid \theta)$.

\begin{problem}[\textsc{Latent State Inference}]
\label{prob:pred}
Given indirect observations $X_{1:N}$, the probability function $\Pr(X \mid S)$, and model parameters $\theta$, for each observation $X_i$ find the latent state $\hat{S}_i \in \mc{S}$ that maximizes the probability $\Pr(\hat{S}_i \mid X_i, \theta) = \Pr(X_i \mid \hat{S}_i) \Pr(\hat{S}_i \mid \theta)$. 
\end{problem}

Problems~\ref{prob:dist} and \ref{prob:pred} also apply in more complex situations with distinct groups of observations and states. 
For instance, if multiple pieces of data are generated from each state $S^*_i$, one can group these \emph{sub-observations} into one full observation $X_i$ (see \cref{app:pool} and \cref{sec:sim}). 
Additionally, multiple environments $v$, each with (i) their own distributions $\Pr^*_v(S)$ of latent states, (ii) lists $S^*_{v,1}, \ldots S^*_{v, N^v}$ of ground-truth latent states, (iii) distributions $\Pr_v(X \mid S)$ of observations, and (iv) lists $X^v_1, \ldots X^v_{N^v}$ of observations, can be subsumed in the original problem definitions (see \cref{app:pool} and \cref{sec:RNA}).

\section{Methods}

Proofs for theorems in this section are provided in \cref{app:proofs}. 

\subsection{The GReinSS Method}
\label{sec:GReinSS_method}

We solve the learning problem (Problem~\ref{prob:dist}) of identifying parameters $\theta$ maximizing $\Pr(X_{1:N} \mid \theta)$ via gradient descent by estimating the gradient $\frac{d}{d\theta} \log(\Pr(X_{1:N} \mid \theta))$ of the data log-likelihood. 
In the following, we show how to accomplish this using policy gradients and a dynamically changing reward function. 
%“Although we use policy-gradient estimators, GReinSS is not formulated as a control problem and does not aim to optimize expected cumulative reward under a stationary environment.”

We define $\theta$ as the parameters of our policy.
The policy defines a probability distribution $\Pr(\tau \mid \theta)$ over trajectories $\tau$, which are sequences of actions concluding with a terminal state $S(\tau)$. 
In our case, the terminal state $S(\tau)$ of each trajectory $\tau$ corresponds to a latent state in $\mc{S}$.
% as the terminal state reached by a trajectory $\tau$.
%Formally, a trajectory is a sequence of actions resulting in some terminal state.  that yield  
% For any trajectory $\tau$, define $s(\tau)$ as the terminal state reached by the trajectory and 
%% Let $\Pr(\tau \mid \theta)$ be the probability of that trajectory for our policy with parameters $\theta$.
As such, we define $\Pr(X \mid \tau) = \Pr(X \mid S(\tau))$.
% \cref{fig:overview}a illustrates $\theta$ generating latent states $S$ which result in observations $X$. 
Analogous to \cref{eq:X_given_theta}, we define 
\begin{equation}
\label{eq:obsTau}
\Pr(X \mid \theta) = \mathbb{E}_\tau[\Pr(X \mid \tau)] =   \mathbb{E}_\tau[\Pr(X \mid S(\tau))] 
\end{equation} 
where $\tau$ is drawn from the distribution $\Pr( \tau \mid \theta )$. 
In practice, we estimate the quantity $\Pr(X_i \mid \theta)$ by sampling, 
% $\tau$ from $\Pr( \tau \mid \theta )$ and averaging $\Pr(X_i \mid \tau)$, 
i.e., $\Pr(X_i \mid \theta) \approx \frac{1}{M} \sum_{j=1}^M \Pr(X_i \mid \tau_j)$ where $\tau_1, \ldots \tau_M$ are sampled from $\Pr(\tau \mid \theta)$. 
This leads us to the following important theorem.

% \begin{definition}
% We define our reward function as follows 
%     \begin{equation}
%     r(\tau) = \sum_{i=1}^N  \frac{\Pr(X_i \mid \tau)}{\Pr(X_i \mid \theta) }.
% \end{equation}
% \label{def:reward}
% \end{definition}

% In order to solve Problem~\ref{prob:dist}, this reward function $r(\tau)$ is dependent on $\theta$ through $\Pr(X_i \mid \theta)$ and dynamically changes during training to keep the policy gradient in the correct direction.  

\begin{theorem}
\label{thm:opt}
% Let the reward function $r(\tau)$ be defined as in Definition~\ref{def:reward}. 
The policy gradient $\mathbb{E}_\tau [ r(\tau)  \frac{d}{d\theta} \log(\Pr(\tau \mid \theta) ) ]$ with dynamically changing rewards 
\begin{equation}
    r(\tau) = \sum_{i=1}^N  \frac{\Pr(X_i \mid \tau)}{\Pr(X_i \mid \theta) }
\label{eq:reward}
\end{equation}
is an unbiased estimator of the gradient $\frac{d}{d\theta}\log(\Pr(X_{1:N} \mid \theta) )$ of the log-likelihood objective. 
% \begin{align}
% \label{eq:GReinSS_policy_gradient}
    % \frac{d}{d\theta}\log(\Pr(X_{1:N} \mid \theta) ) = \mathbb{E}_\tau [ r(\tau)  \frac{d}{d\theta} \log(\Pr(\tau \mid \theta) ) ].
% \end{align}
% In particular,
% \begin{align}
%     &\frac{d}{d\theta}\log(\Pr(X_{1:N} \mid \theta) )  \\
%     &= \mathbb{E}_\tau [ r(\tau)  \frac{d}{d\theta} \log(\Pr(\tau \mid \theta) ) ].
% \end{align}
% Consequently, standard policy gradient updates using the reward $r(\tau)$ perform gradient ascent on the data log-likelihood $\log (\Pr(X_1,\ldots,X_N \mid \theta))$.
% \label{lem:GReinSS}
\end{theorem}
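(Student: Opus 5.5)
The plan is a direct computation: the likelihood factorizes over observations, each factor is an expectation over trajectories, and the score-function (log-derivative) identity turns the derivative of that expectation into an expectation weighted by $\frac{d}{d\theta}\log\Pr(\tau\mid\theta)$.

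\textbf{Step 1 (split the log-likelihood).} Since $\Pr(X_{1:N}\mid\theta)=\prod_{i=1}^N \Pr(X_i\mid\theta)$, linearity of differentiation gives
\begin{equation*}
\frac{d}{d\theta}\log\Pr(X_{1:N}\mid\theta)=\sum_{i=1}^N \frac{1}{\Pr(X_i\mid\theta)}\,\frac{d}{d\theta}\Pr(X_i\mid\theta).
\end{equation*}

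\textbf{Step 2 (differentiate each factor).} Write each factor via \cref{eq:obsTau} as $\Pr(X_i\mid\theta)=\sum_\tau \Pr(X_i\mid S(\tau))\Pr(\tau\mid\theta)$. The key observation is that $\Pr(X_i\mid\tau)$ does not depend on $\theta$, so only the trajectory probabilities get differentiated. Using $\frac{d}{d\theta}\Pr(\tau\mid\theta)=\Pr(\tau\mid\theta)\frac{d}{d\theta}\log\Pr(\tau\mid\theta)$, we get
\begin{equation*}
\frac{d}{d\theta}\Pr(X_i\mid\theta)=\mathbb{E}_\tau\!\left[\Pr(X_i\mid\tau)\,\frac{d}{d\theta}\log\Pr(\tau\mid\theta)\right].
\end{equation*}

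\textbf{Step 3 (collect into a single expectation).} The quantity $1/\Pr(X_i\mid\theta)$ is a constant with respect to $\tau$, so it can be moved inside the expectation. Swapping the finite sum over $i$ with the expectation then yields
\begin{equation*}
\frac{d}{d\theta}\log\Pr(X_{1:N}\mid\theta)=\mathbb{E}_\tau\!\left[r(\tau)\,\frac{d}{d\theta}\log\Pr(\tau\mid\theta)\right],
\end{equation*}
with $r(\tau)$ exactly as in \cref{eq:reward}. Hence a single sampled term $r(\tau)\frac{d}{d\theta}\log\Pr(\tau\mid\theta)$, for $\tau\sim\Pr(\tau\mid\theta)$, is unbiased for the gradient.

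\textbf{Main obstacle.} The delicate point is justifying the interchange of derivative and expectation. If the trajectory space is finite, which is natural for combinatorial $\mathcal{S}$, the interchange is trivial. Otherwise I would assume $\Pr(\tau\mid\theta)$ is differentiable with an integrable dominating bound and invoke dominated convergence. I would also need $\Pr(X_i\mid\theta)>0$ so the reward is defined.

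A further caveat concerns what "unbiased" means in practice. The identity is exact when $\Pr(X_i\mid\theta)$ in the reward is the true value, treated as fixed rather than differentiated through. If instead it is replaced by the Monte Carlo estimate $\frac1M\sum_{j}\Pr(X_i\mid\tau_j)$, the estimator is only asymptotically unbiased. I would state this clearly: unbiasedness holds conditional on exact normalizers.
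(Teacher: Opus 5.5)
Your proposal is correct and follows the paper's proof essentially line for line: split $\log\Pr(X_{1:N}\mid\theta)$ into $\sum_i \log\sum_{\tau}\Pr(X_i\mid\tau)\Pr(\tau\mid\theta)$, differentiate using that $\Pr(X_i\mid\tau)$ does not depend on $\theta$, apply the log-derivative identity, and swap the sums to recognize $\mathbb{E}_\tau[r(\tau)\frac{d}{d\theta}\log\Pr(\tau\mid\theta)]$. Your extra caveats go beyond the paper, which simply sums over the trajectory set $\mathcal{T}$ and treats $\Pr(X_i\mid\theta)$ as exact; these caveats are the derivative--sum interchange, positivity of $\Pr(X_i\mid\theta)$, and the fact that unbiasedness holds only with exact rather than Monte Carlo normalizers.
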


% That is, $\frac{d}{d\theta}\log(\Pr(X_{1:N} \mid \theta) ) = \mathbb{E}_\tau [ r(\tau)  \frac{d}{d\theta} \log(\Pr(\tau \mid \theta) ) ]$.
Intuitively, the denominator $\Pr(X_i \mid \theta)$ rescales each observation’s contribution to the total reward such that trajectories are rewarded based on their proportional contribution to $\Pr(X_i \mid \theta)$ rather than the raw probability $\Pr(X_i \mid \tau)$.
This rescaling results in solving for the optimal distribution over trajectories $\Pr(\tau \mid \theta)$ rather than converging to one highest reward trajectory as illustrated in \cref{app:toy} and \cref{fig:toy}.
As such, we can apply standard policy gradient updates to solve Problem~\ref{prob:dist}. 

\begin{corollary}
GReinSS's application of standard policy gradient updates to $\theta$ using the reward $r(\tau)$ performs gradient ascent on the data log-likelihood $\log (\Pr(X_{1:N} \mid \theta))$.
\end{corollary}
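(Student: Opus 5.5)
The plan is to prove the identity
\[
\frac{d}{d\theta}\log(\Pr(X_{1:N} \mid \theta)) = \mathbb{E}_\tau\big[ r(\tau) \tfrac{d}{d\theta}\log(\Pr(\tau \mid \theta))\big]
\]
by a direct calculation. Here the rewards $r(\tau)$ are treated as fixed numbers when differentiating: the denominators $\Pr(X_i \mid \theta)$ are evaluated at the current $\theta$ but are not differentiated through. This is the sense of ``dynamically changing'' rewards. Unbiasedness of the Monte Carlo estimator $\frac{1}{M}\sum_j r(\tau_j)\frac{d}{d\theta}\log\Pr(\tau_j\mid\theta)$ then follows immediately, since its expectation is exactly the right-hand side.

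First, I factor the likelihood and differentiate termwise. Since $\log \Pr(X_{1:N}\mid\theta) = \sum_{i=1}^N \log \Pr(X_i\mid\theta)$, the chain rule gives
\[
\frac{d}{d\theta}\log \Pr(X_{1:N}\mid\theta) = \sum_i \frac{1}{\Pr(X_i\mid\theta)}\,\frac{d}{d\theta}\Pr(X_i\mid\theta).
\]
Second, I use \cref{eq:obsTau} to write $\Pr(X_i\mid\theta)=\sum_\tau \Pr(X_i\mid\tau)\Pr(\tau\mid\theta)$. The factor $\Pr(X_i\mid\tau)=\Pr(X_i\mid S(\tau))$ does not depend on $\theta$, so only $\Pr(\tau\mid\theta)$ is differentiated. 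Applying the log-derivative trick $\frac{d}{d\theta}\Pr(\tau\mid\theta)=\Pr(\tau\mid\theta)\frac{d}{d\theta}\log\Pr(\tau\mid\theta)$ yields
\[
\frac{d}{d\theta}\Pr(X_i\mid\theta)=\mathbb{E}_\tau\big[\Pr(X_i\mid\tau)\tfrac{d}{d\theta}\log\Pr(\tau\mid\theta)\big].
\]
Third, I substitute this back and use linearity of expectation. The factor $1/\Pr(X_i\mid\theta)$ is constant in $\tau$, so it moves inside the expectation. Exchanging the finite sum over $i$ with the expectation collects $\sum_i \Pr(X_i\mid\tau)/\Pr(X_i\mid\theta)=r(\tau)$, which gives the claim.

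The main obstacle is justifying the interchange of $\frac{d}{d\theta}$ with the sum over trajectories. If the trajectory space is finite, or at least countable with a discrete terminal set $\mathcal{S}$, this is immediate for finite sums. For countably infinite sums, I would assume $\Pr(\tau\mid\theta)$ is differentiable and that the derivative sum converges uniformly locally, for example via a dominating integrable bound, and then cite standard differentiation under the sum. I also need $\Pr(X_i\mid\theta)>0$ so that the rewards are well defined, and $\Pr(\tau\mid\theta)>0$ wherever the log-derivative is taken; trajectories with zero probability contribute zero to both sides. Finally, I would note explicitly that the derivative of $r$ itself is deliberately excluded. This matches the standard policy gradient form and is exactly what makes the identity hold.
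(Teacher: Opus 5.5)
Your proposal is correct and follows essentially the same route as the paper, which obtains the corollary from its appendix proof of \cref{thm:opt}. That proof writes $\log\Pr(X_{1:N}\mid\theta)$ as $\sum_i \log\sum_\tau \Pr(X_i\mid\tau)\Pr(\tau\mid\theta)$, differentiates only $\Pr(\tau\mid\theta)$, applies the log-derivative trick, and exchanges the sums over $i$ and $\tau$ to collect $r(\tau)$. Your extra remarks are appropriate care that the paper leaves implicit: treating $r$ as frozen at the current $\theta$, justifying the interchange of derivative and sum, and requiring $\Pr(X_i\mid\theta)>0$.
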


In other words, after each gradient update yielding parameters $\theta$, GReinSS updates the dynamic rewards $r(\tau)$ using these new parameters via \eqref{eq:obsTau} and \eqref{eq:reward}, which in turn are used for the next gradient update of $\theta$.
We note that in standard RL, policy gradients $\mathbb{E}_\tau [ r(\tau)  \frac{d}{d\theta} \log(\Pr(\tau \mid \theta) ) ]$ are used to maximize the expectation value $\mathbb{E}_\tau [r(\tau)]$ of the rewards, given a fixed reward function $r(\tau)$ independent of the policy parameters $\theta$.
Our dynamic rewards, which depend on parameters $\theta$, allow the same policy gradients procedure to instead optimize the data log-likelihood $\log(\Pr(X_{1:N} \mid \theta))$ of our observations $X_{1:N}$.
Importantly, despite the rewards being calculated using $\theta$, the gradient $\frac{d}{d \theta}$ is only applied to $\log(\Pr(\tau \mid \theta))$ not $r(\tau)$.
% (as is the case in standard policy gradients as $r(\tau)$ is constant). 

%Although GReinSS uses policy-gradient estimators, it is not formulated as a control problem: there is no environment interaction, discounted return, or optimal policy in the control sense. Instead, trajectories are used as a computational device for estimating likelihood gradients in a structured latent-variable model.

% \begin{theorem}
% Let the reward function $r(\tau)$ be defined as in Definition~\ref{def:reward}. 
% Then the policy gradient $\mathbb{E}_\tau [ r(\tau)  \frac{d}{d\theta} \log(\Pr(\tau \mid \theta) ) ]$ induced by $r(\tau)$ is an unbiased estimator of the gradient $\frac{d}{d\theta}\log(\Pr(X_{1:N} \mid \theta) )$ of the log-likelihood objective. 
% In particular,
% \begin{align}
%     &\frac{d}{d\theta}\log(\Pr(X_{1:N} \mid \theta) )  \\
%     &= \mathbb{E}_\tau [ r(\tau)  \frac{d}{d\theta} \log(\Pr(\tau \mid \theta) ) ].
% \end{align}
% Consequently, standard policy gradient updates using the reward $r(\tau)$ perform gradient ascent on the data log-likelihood $\log (\Pr(X_1,\ldots,X_N \mid \theta))$.
% \label{lem:GReinSS}
% \end{theorem}

After optimizing $\Pr(X_{1:N} \mid \theta)$ to solve Problem~\ref{prob:dist}, one can solve Problem~\ref{prob:pred} by sampling latent states and approximating which latent state $\hat{S}_i$ maximizes $\Pr(\hat{S}_i \mid \theta) \Pr(X_i \mid \hat{S}_i)$ for each data point $i$ (details in \cref{app:inference}).

\ste{Although \cref{thm:opt} calculates the reward over all observations, policy gradients still give an unbiased estimator of the log-likelihood objective if one applies mini-batching to calculate rewards (\cref{thm:minibatch}).
Specifically, for a mini-batch $B \subset [N]$ the mini-batch reward is $r_B(\tau) = \sum_{i \in B} \Pr(X_i \mid \tau) / \Pr(X_i \mid \theta)$. 
Analogous to mini-batch stochastic gradient ascent in standard likelihood-based learning, applying mini-batching to GReinSS can enable scaling to very large datasets. }

% \begin{itemize}
%     \item Standard RL with policy gradients
%     \item Fixed rewards $r(\tau)$ for any trajectory $\tau$
%     \item The goal is to identify a policy with parameters $\theta$ that maximize $\mathbb{E}_{\tau \sim \Pr(\tau \mid \theta)}[ r(\tau) ]$. 
%     \item Achieved using policy gradients $\mathbb{E}_\tau [ r(\tau)  \frac{d}{d\theta} \log(\Pr(\tau \mid \theta) ) ]$.
%     \item $\mathbb{E}_\tau [ r(\tau)  \frac{d}{d\theta} \log(\Pr(\tau \mid \theta) ) ] = \sum_{\tau}  \Pr(\tau \mid \theta) r(\tau) 1/\Pr(\tau \mid \theta) \frac{d}{d\theta} \Pr(\tau \mid \theta) = \frac{d}{d\theta} \sum_{\tau}  r(\tau) \Pr(\tau \mid \theta) $
% \end{itemize}

\subsection{Off-policy Learning and Importance Sampling}
\label{sec:off-policy}

In many cases, off-policy learning is vital for practically solving Problem~\ref{prob:dist}. 
Specifically, sampling from our policy directly may generate very few trajectories $\tau$ that give a high (or even nonzero) probability $\Pr(X_i \mid \tau)$ for some (or even all) observations $X_i$. 
Thus, it makes sense to sample from a distribution that explicitly takes into account $X_{1:N}$ and ensures the sampling of latent states $S$ that have reasonably high probabilities $\Pr(X_i \mid S)$ for various observations $X_i$. 
To this end, we use Bayes' theorem and obtain %a sampling probability conditioned on an observation $X_i$ as 
% \begin{equation}
    $\Pr(\tau \mid X_i, \theta) = {\Pr(X_i \mid \tau) \Pr(\tau \mid \theta)} / {\Pr(X_i \mid  \theta)}$. 
% \end{equation}
%Taking the expectation value $\mathbb{E}_{i \sim [N]}[ \Pr( \tau \mid X_i, \theta ) ]  $ of randomly selected $i \in [N]$ then gives
%\begin{equation}
%    \Pr(\tau \mid X_{1:N}, \theta) = \frac{1}{N} \sum_{i=1}^N \Pr(\tau \mid X_i, \theta). 
%\end{equation} 
%\minline{$\Pr(\tau \mid X_{1:N}, \theta)$ is the conjunction of $X_1,\ldots,X_N$ --- we need different notation, or no notation at all.}

We then have the following theorem. %\meknote{Worried about the word `optimal'. Maybe we should only say `unbiased variance-minimizing'.}

\begin{theorem}
    The unbiased variance-minimizing off-policy sampling proposal is $q(\tau \mid X_{1:N}, \theta) = \frac{1}{N} \sum_{i=1}^N \Pr(\tau \mid X_i, \theta)$. 
%\begin{equation}
%\end{equation} 
\label{lem:sampling}
\end{theorem}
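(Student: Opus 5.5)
The plan is to cast \cref{lem:sampling} as the classical result on optimal importance-sampling proposals, applied to the reward-weighted quantity from \cref{thm:opt}. Under an off-policy proposal $q(\tau)$, the policy gradient is estimated by reweighting: for $\tau \sim q$ we use
\[
\hat{g}_q(\tau) = w_q(\tau)\, \frac{d}{d\theta}\log(\Pr(\tau \mid \theta)), \qquad
w_q(\tau) = \frac{\Pr(\tau \mid \theta)\, r(\tau)}{q(\tau)}.
\]
This estimator is unbiased for $\mathbb{E}_\tau[r(\tau)\frac{d}{d\theta}\log(\Pr(\tau\mid\theta))]$ whenever $q$ covers the support of $\Pr(\tau\mid\theta) r(\tau)$. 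That unbiasedness is a one-line change of measure. Combined with \cref{thm:opt}, it gives unbiasedness for the log-likelihood gradient.

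I will make precise what ``variance-minimizing'' refers to. The score factor $\frac{d}{d\theta}\log(\Pr(\tau\mid\theta))$ is a vector whose direction is unknown a priori. A direction-independent criterion therefore applies to the scalar weight that multiplies it, that is, the importance weight on the reward. Concretely, I minimize the variance of $w_q(\tau)$ as an estimator of $\mathbb{E}_\tau[r(\tau)] = \sum_\tau \Pr(\tau\mid\theta) r(\tau)$.

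First I compute this total. Swapping sums and using \cref{eq:obsTau},
\[
\sum_\tau \Pr(\tau\mid\theta) r(\tau)
= \sum_{i=1}^N \frac{\sum_\tau \Pr(X_i\mid\tau)\Pr(\tau\mid\theta)}{\Pr(X_i\mid\theta)} = N.
\]
By Bayes' theorem, $\Pr(\tau\mid\theta) r(\tau) = \sum_i \Pr(\tau\mid X_i,\theta)$.

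The core step is Cauchy--Schwarz. We have $\mathbb{E}_q[w_q] = N$ for every admissible $q$, and
\[
\mathbb{E}_q[w_q^2] = \sum_\tau \frac{(\Pr(\tau\mid\theta) r(\tau))^2}{q(\tau)}
\ \ge\ \Big(\sum_\tau \Pr(\tau\mid\theta) r(\tau)\Big)^2 = N^2,
\]
with equality iff $q(\tau) \propto \Pr(\tau\mid\theta) r(\tau)$. Since $r \ge 0$, this proportional choice is a valid distribution. Normalizing by $N$ gives exactly $q(\tau\mid X_{1:N},\theta) = \frac1N\sum_i \Pr(\tau\mid X_i,\theta)$. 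For this $q$ the weight is the constant $w_q \equiv N$, so the variance is zero, and any other $q$ has strictly positive variance.

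I will also note a practical consequence: under this proposal the gradient estimator reduces to $N\frac{d}{d\theta}\log(\Pr(\tau\mid\theta))$. Sampling from $q$ itself is natural, by picking $i$ uniformly and then sampling from $\Pr(\tau\mid X_i,\theta)$.

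The main obstacle is justifying the reduction to the scalar weight, since the full vector estimator's variance-optimal proposal would be $\propto \Pr(\tau\mid\theta) r(\tau)\,\|\frac{d}{d\theta}\log\Pr(\tau\mid\theta)\|$. I would handle this by stating explicitly that optimality is with respect to the reward/likelihood weighting, uniformly over score directions. I would also remark that, under the $N$-fold estimator, the score term has mean zero under $\Pr(\tau\mid\theta)$ but not under $q$, so the score term cannot be optimized without knowledge of $\theta$-derivatives.
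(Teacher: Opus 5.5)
Your proof is correct and follows the paper's route. The paper cites Kahn and Owen for the fact that the zero-variance proposal is $\propto |r(\tau)|\Pr(\tau \mid \theta)$ and then applies Bayes' rule to get $\sum_i \Pr(\tau \mid X_i,\theta) = N q(\tau \mid X_{1:N},\theta)$, exactly as you do. Your Cauchy--Schwarz argument reproves that cited fact and makes the normalizer $N$ explicit, and your caveat makes explicit an assumption the paper uses silently: the result is optimal for the scalar reward weight, not for the full vector estimator, whose trace-variance optimum carries an extra $\|\frac{d}{d\theta}\log\Pr(\tau \mid \theta)\|$ factor.
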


%In the special case where $\Pr(\tau \mid X_1, \ldots, X_N, \theta)$ can be directly sampled from, and each terminal state $s$ can only be generated by one trajectory $\tau$, GReinSS simplifies to a special case of gradient based stochastic generalized expectation maximization~\cite{neal1998view, delyon1999convergence}.
Generally, one cannot directly sample from $q(\tau \mid X_{1:N}, \theta)$ but can use heuristics to bias the sampling towards $q(\tau \mid X_{1:N}, \theta)$. 
For example, in the cancer phylogeny application CloMu~\cite{ivanovic2023modeling}, the observations $X_i$ directly show which nodes are in the graph generated by $\tau$. 
Thus, the off-policy sampling procedure only allows trajectories that generate a set of nodes corresponding to some observation $X_i$ in $X_{1:N}$. 
As another example, CNRein~\cite{ivanovic2025cnrein} first uses a simple non-machine learning algorithm called CNNaive to give initial estimates of plausible latent states and then biases trajectories towards these plausible latent states. 
%For gaining insights in technical domains one may want to use a small principled and interpretable model for $\Pr(\tau \mid \theta)$ while speeding up training by using a large uninterpretable model to closely estimate $\Pr(\tau \mid X_1, \ldots, X_N, \theta)$. 
After applying off-policy learning, one can account for this in policy gradients using importance sampling (\cref{app:import}).

\subsection{Baselines and Special Cases}
\label{sec:baselines}

% \minline{Better indicate baseline vs. special case}

\paragraph{Maximum likelihood generative modeling special case:}
We start with the following special case, where the learning problem simplifies to maximum likelihood generative modeling with the log-likelihood objective.

\begin{lemma}
Let the given observations equal the ground-truth states, i.e., $X_i = S^*_i$ such that $\Pr(X_i \mid S) = 1$ if $S = X_i = S^*_i$ and $0$ otherwise.
Then, Problem~\ref{prob:dist} of solving $\text{argmax}_\theta \Pr(X_{1:N} \mid \theta)$ simplifies to $\text{argmax}_\theta \sum_{i=1}^N \log(\Pr(S^*_i \mid \theta))$.
% \begin{equation}
    % \text{argmax}_\theta \prod_{i=1}^N \Pr(S^*_i \mid \theta) = 
    % \text{argmax}_\theta \sum_{i=1}^N \log(\Pr(S^*_i \mid \theta)).
% \end{equation}
\label{obs:maxLike}
\end{lemma}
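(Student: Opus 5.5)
The argument substitutes the indicator observation model into \cref{eq:X_given_theta}, which collapses each marginal likelihood to a single term, and then uses the strict monotonicity of $\log$.

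\textbf{Step 1.} Fix an observation $X_i$. By assumption $\Pr(X_i \mid S)=1$ if $S=S^*_i$ and $0$ otherwise. In the sum $\sum_{S\in\mc{S}} \Pr(X_i\mid S)\Pr(S\mid\theta)$ every term with $S\neq S^*_i$ therefore vanishes, and we get $\Pr(X_i\mid\theta)=\Pr(S^*_i\mid\theta)$. The same identity holds in the trajectory formulation \cref{eq:obsTau}, where $\mathbb{E}_\tau[\mathbf{1}\{S(\tau)=S^*_i\}]=\Pr(S^*_i\mid\theta)$. Taking the product over $i$ gives $\Pr(X_{1:N}\mid\theta)=\prod_{i=1}^N \Pr(S^*_i\mid\theta)$.

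\textbf{Step 2.} The logarithm is strictly increasing on $(0,\infty)$, so maximizing the product over $\theta$ is equivalent to maximizing $\log\prod_i \Pr(S^*_i\mid\theta)=\sum_{i=1}^N\log\Pr(S^*_i\mid\theta)$. The two problems have the same set of maximizers.

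\textbf{Main obstacle.} The only delicate point is the degenerate case where $\Pr(S^*_i\mid\theta)=0$ for some $i$, so that the log-objective equals $-\infty$. I would handle it as follows.
\begin{itemize}
\item If some $\theta$ makes every $\Pr(S^*_i\mid\theta)$ positive, then the product is positive there. Every $\theta$ with a zero factor has product $0$ and log-objective $-\infty$, so it is a maximizer of neither problem, and the argmax sets agree on the remaining parameters by Step 2.
\item If no such $\theta$ exists, the product is $0$ and the log-objective is $-\infty$ for every $\theta$. Both problems are then maximized by every $\theta$, so the argmax sets still coincide trivially.
\end{itemize}
Either way the two argmax sets are equal, which is the claim of \cref{obs:maxLike}.
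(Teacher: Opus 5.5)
Your proof is correct and follows the same route as the paper's: the indicator observation model collapses each marginal $\Pr(X_i \mid \theta)$ to $\Pr(S^*_i \mid \theta)$, and the monotonicity of $\log$ turns the product into a sum. The only differences are that you substitute before taking logs while the paper does so after, and that you treat the case $\Pr(S^*_i \mid \theta) = 0$ explicitly, which the paper omits; that case is also covered automatically under the convention $\log 0 = -\infty$, since $\log$ is then strictly increasing on $[0,\infty)$.
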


%\meknote{Make this an observation. List the premise as well: `If $X_i = S^*_i$ for all $i\in[N]$ then }
%Specifically, the learning problem becomes equivalent to maximum likelihood generative modeling with the log-likelihood objective as 
Many approaches exist for solving this problem, such as variational autoencoders (VAE)~\cite{kingma2013auto}, discrete diffusion~\cite{austin2021structured}, and autoregressive generation~\cite{bengio2003neural}. 
Furthermore, if each $X_i = S^*_i$ uniquely determines one trajectory $\tau$, then GReinSS simplifies to autoregressive generation as shown in \cref{lem:sup}
% as shown in the following theorem. 

%\begin{lemma}
%Autoregressive generation using the negative log-likelihood loss is equivalent to GReinSS when for all $i \in [N]$ the probability $\Pr(X_i \mid \tau)$ is non-zero for exactly one known trajectory $\tau^*_i$. 
%\label{lem:sup}
%\end{lemma}

% Problem~\ref{prob:pred} becomes trivial if the observations $X_{1:N}$ and equal to the ground truth states $S^*_1, \ldots S^*_N$ and thus $S^*_1, \ldots, S^*_N$ are known. \meknote{write it the same way as the start of the subsection: `observations $X_{1:N}$ are the ground truth states themselves'}

\begin{table}[t]
\caption{
\textbf{Comparison of GReinSS to baseline methods.}
While local search can infer individual latent states $S_i$ from each observation $X_i$, it does not utilize parameters $\theta$.
While variational autoencoders (VAE), autoregression, and diffusion can be adapted to solve Problem~\ref{prob:dist} via Generalized Expectation-Maximization (GEM), it requires an inexact modification of the GEM algorithm. 
% \mek{[not happy with the word approximate]}
On the other hand, the two policy learning (PL) baselines maximize proxy objectives. %\mek{[remove dir impl column]}
% RL methods and search can be directly implemented (Dir. Imp.), whereas GEM cannot be directly applied and must be modified/approximated to be used. 
% Maximizing $\prod_{i=1}^N \Pr(X_i \mid \theta) = \prod_{i=1}^N \sum_{S} \Pr(X_i \mid S) \Pr(S \mid \theta)$ is done directly by GReinSS and approximately by GEM-based methods, but is not attempted for other baselines. 
% Instead, naive policy gradients and GFlowNets maximize proxy objectives, and local search does not utilize parameters $\theta$. 
%\mek{Keep it a table, but include additional column about equivalence to special cases of GReinSS, referencing corresponding lemmas. Also, move up [should be right after Fig. 1]}
}
\label{tab:method}
%\vskip 0.15in
\begin{center}
\begin{small}
\begin{sc}
%\resizebox{0.985\columnwidth}{!}{
\begin{tabular}{lcccr}
\toprule
\textbf{method} & \textbf{paradigm} & \textbf{maximizes obj.} \\
&  &  $\Pr(X_{1:N} \mid \theta)$  \\
\midrule
%\hline
local search & search & no \\
%\hline
VAE & GEM  & approx. \\
%\hline
autoregression &  GEM &  approx. \\
%auto- &  \multirow{2}{*}{\centering GEM} &  \multirow{2}{*}{\centering approx.} \\
%regression &  &  \\
%\hline
diffusion & GEM & approx. \\
%\hline
%naive policy & \multirow{2}{*}{\centering RL}  & \multirow{2}{*}{\centering no} \\
%gradients &  & \\
naive policy grad. & PL  & no \\
%\hline
GFlowNets & PL   & no \\
%\hline
GReinSS & PL  & yes \\
\bottomrule
\end{tabular}
%}
\end{sc}
\end{small}
\end{center}
\vskip -0.2in
\end{table}

\paragraph{Local search baseline:}
% If rather than observing the ground-truth latent states, 
We obtain an additional special case if we know that each indirect observation $X_i$ can be explained by exactly one latent state $S \in \mc{S}$.
% If we do not observe the ground-truth latent states despite each observation $X_i$ having non-zero probability $\Pr(X_i \mid S^*_i)$ for only one latent state $S^*_i$

\begin{observation}
If for an observation $X_i$ there is exactly one latent state $\hat{S}_i \in \mc{S}$ such that $\Pr(X_i \mid \hat{S}_i)$ is non-zero then $\text{argmax}_S \Pr(X_i \mid S) \Pr(S \mid \theta) = \text{argmax}_S \Pr(X_i \mid S) = \hat{S}_i$ for any $\theta$. 
\end{observation}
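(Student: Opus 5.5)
The argument is a direct case analysis on the support of $S \mapsto \Pr(X_i \mid S)$. By hypothesis this support is the singleton $\{\hat{S}_i\}$. So for every $S \neq \hat{S}_i$ we have $\Pr(X_i \mid S) = 0$, and hence $\Pr(X_i \mid S)\Pr(S \mid \theta) = 0$ for every $\theta$.

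First I handle $\text{argmax}_S \Pr(X_i \mid S)$. The value at $\hat{S}_i$ is strictly positive and every other value is zero. So $\hat{S}_i$ is the unique maximizer, with no assumptions about $\theta$.

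Next I handle the posterior-weighted objective $\Pr(X_i \mid S)\Pr(S \mid \theta)$. Its value is zero off $\hat{S}_i$ and equals $\Pr(X_i \mid \hat{S}_i)\Pr(\hat{S}_i \mid \theta)$ at $\hat{S}_i$. If $\Pr(\hat{S}_i \mid \theta) > 0$, this value is strictly positive, so $\hat{S}_i$ is again the unique maximizer and the two argmax sets coincide.

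The only real obstacle is the degenerate case $\Pr(\hat{S}_i \mid \theta) = 0$. Then the objective is identically zero and every $S$ attains the maximum. I would treat this as the following convention: the argmax is taken over states consistent with the observation, i.e.\ with $\Pr(X_i \mid S) > 0$, since only these have positive posterior mass $\Pr(S \mid X_i, \theta)$ whenever that posterior is defined. Equivalently, one can note that $\Pr(X_i \mid \theta) = \Pr(X_i \mid \hat{S}_i)\Pr(\hat{S}_i \mid \theta)$, so this case is exactly when the posterior of Problem~\ref{prob:pred} is undefined. Under either reading $\hat{S}_i$ is the answer, so the conclusion holds for any $\theta$ and local search on $\Pr(X_i \mid S)$ alone solves Problem~\ref{prob:pred}.
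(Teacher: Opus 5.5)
Your argument is correct and is the natural one. The paper gives no proof of this Observation (none appears in \cref{app:proofs}); it evidently treats the claim as immediate from the fact that $\Pr(X_i \mid S)\Pr(S \mid \theta)$ vanishes for every $S \neq \hat{S}_i$, which is exactly your case analysis.

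Your handling of the degenerate case $\Pr(\hat{S}_i \mid \theta) = 0$ is a genuine refinement, not a gap. In that case the product is identically zero, so the argmax is all of $\mc{S}$ and the paper's ``for any $\theta$'' is not literally true. The claim needs either the assumption $\Pr(\hat{S}_i \mid \theta) > 0$ or a convention such as yours. That assumption holds automatically for softmax policies under which $\hat{S}_i$ is reachable. Whichever reading is adopted, $\hat{S}_i$ remains a maximizer.
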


%Thus, if $S^*_1, \ldots, S^*_N$ are unknown despite each $\Pr(X_i \mid S)$ being non-zero for only one state $S$, then Problem~\ref{prob:pred} simplifies to $\hat{S}_i = \text{argmax}_S \Pr(X_i \mid S)$.
Thus, if for each $X_i$ the probability $\Pr(X_i \mid S)$ is non-zero for only one state $S$, then Problem~\ref{prob:pred} simplifies to $\hat{S}_i = \text{argmax}_S \Pr(X_i \mid S)$.
To provide intuition, we implement an extremely simple local search baseline that directly optimizes $\text{argmax}_S \Pr(X_i \mid S)$ without using any model $\Pr(S \mid \theta)$ with details in \cref{app:baselineExtra}.
This baseline can be effective in cases where $\Pr(X_i \mid S)$ is near zero for all but one state $S$ (as shown in \cref{sec:sim}).

% \minline{Rewrite the above paragraph a bit. Stefan: I tried }

\paragraph{Expectation maximization baselines:}
Expectation maximization consists of alternating between an E-step of estimating latent states $\hat{S}_{1:N}$ and an M-step of optimizing the parameters $\theta$.
Specifically, we define $q(S \mid X_{1:N}, \theta) = \frac{1}{N} \sum_{i=1}^N \Pr(X_i\mid S) \Pr(S \mid \theta)$.
Intuitively, this distribution is approximately an averaged probability distribution over predicted states $\hat{S}_i$ for $i \in [N]$, since the optimal predicted state $\hat{S}_i = \text{argmax}_S \Pr(X_i\mid S) \Pr(S \mid \theta)$.   
As shown in \cref{lem:GEMsimple}, expectation maximization alternates between an E-step of estimating latent states $S$ (via the term $q(S \mid X_{1:N}, \theta)$) and an M-step of maximizing the complete-data log-likelihood expectation value using  $\theta^{(t+1)} = \text{argmax}_{\theta} \mathbb{E}_{S \sim q(S \mid X_{1:N}, \theta^{(t)}  )} [ \log(  \Pr(S \mid \theta )) ] $.
The term being maximized only involves $S$ and $\theta$ but not $X_{1:N}$ due to the fact that $X_{1:N}$ only depends on $\theta$ through $S$ (\cref{lem:GEMsimple}). 
Consequently, the M-step is simply selecting $\theta$ to maximize the likelihood of a known distribution of states, similarly to solving Problem~\ref{prob:dist} in the special case where $S^*_{1:N}$ is known (as in Observation~\ref{obs:maxLike}). %i.e.\ maximum likelihood generative modeling). \meknote{inside parentheses refer to new observation}

For generalized expectation maximization (GEM), the M-step consists of updating $\theta$ to increase the log-likelihood of the distribution of states rather than solving for the optimal $\theta$. 
Exactly solving GEM cannot be utilized as a baseline method since the E-step cannot be exactly computed for problems with an exponentially large space $\mc{S}$ of latent states. 
% However, the E-step can be approximated by identifying latent states $\hat{S}_{1:N}$ as approximations of $S^*_{1:N}$ using the updated (not necessarily optimal) $\theta$
In principle, the E-step could also be approximated using variational inference by introducing an auxiliary posterior inference model; however, knowing $\Pr(X\mid S)$ enables the simpler approximation of using the current $\theta$ to identify latent states $\hat{S}_{i}$ that maximize $\Pr(X_i \mid S) \Pr(S \mid \theta)$ as approximations of $S^*_{1:N}$ (i.e.\ an instance of Problem~\ref{prob:pred}).
%However, the E-step can be approximated by using the current $\theta$ to identify latent states $\hat{S}_{i}$ that maximize $\Pr(X_i \mid S) \Pr(S \mid \theta)$ as approximations of $S^*_{1:N}$, i.e.\ an instance of Problem~\ref{prob:pred}). 
These inferred latent states $\hat{S}_{1:N}$ are given equal probability, and other states are given probability $0$ rather than marginalizing over the entire exponentially large state space $\mc{S}$. 
% after each local update to $\theta$ (as is done in Problem~\ref{prob:pred}). 
Consequently, our approximation of GEM consists of alternating between an approximated E-step of inferring $\hat{S}_{1:N}$ and an exact M-step of updating $\theta$ to increase the log-likelihood $\sum_{i=1}^N \log(  \Pr(S^*_i \mid \theta ))$ via gradient descent. 
Additional details and methodological comparisons with GReinSS are provided in \cref{app:GReinSSGEM}.

To serve as baselines, this algorithm can easily be combined with standard methods for maximum likelihood generative modeling.
We therefore solve GEM using variational autoencoders, discrete diffusion, and autoregressive models as shown in \cref{tab:method} (details provided in
\cref{app:arch,app:baselineExtra,app:inference}).
Problem~\ref{prob:pred} is solved by sampling $S$ from $\Pr(S \mid \theta)$ and then predicting states that maximize $\Pr(S \mid \theta) \Pr(X_i \mid S)$ (details in \cref{app:inference}).

\paragraph{Policy learning baselines:}

In one special case, GReinSS simplifies to standard policy gradients. 
%In constrast to each $X_i$ indicating its own unique $\tau$, another extreme case is when all $X_i$ have the same distribution $\Pr(X_i \mid \tau)$. 

%\minline{Should we do this in terms of $S$ rather than $\tau$}
\begin{lemma}
Let $\Pr(X_i \mid S) = \Pr(X_j \mid S)$ across all $S \in \mathcal{S}$ for all $i, j \in [N]$. 
Then, GReinSS simplifies to standard policy gradients with rewards $r'(\tau) = \sum_{i=1}^N \Pr(X_i \mid \tau)$ and standard reward normalization. 
\label{lem:RL}
\end{lemma}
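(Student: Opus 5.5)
Under the hypothesis of \cref{lem:RL}, all observations share one likelihood function. Write $f(\tau) := \Pr(X_1 \mid \tau) = \Pr(X_i \mid \tau)$ for every $i \in [N]$; this is well defined because $\Pr(X_i \mid \tau) = \Pr(X_i \mid S(\tau))$ and the hypothesis says these agree for all $S$. By \cref{eq:obsTau}, every denominator in \cref{eq:reward} is then the same number:
\[
Z(\theta) := \Pr(X_i \mid \theta) = \mathbb{E}_\tau[f(\tau)] \quad \text{for all } i.
\]
Substituting into \cref{eq:reward} gives
\[
r(\tau) = \sum_{i=1}^N \frac{f(\tau)}{Z(\theta)} = \frac{N f(\tau)}{N Z(\theta)} = \frac{r'(\tau)}{\mathbb{E}_\tau[r'(\tau)]},
\]
where $r'(\tau) = \sum_{i=1}^N \Pr(X_i \mid \tau) = N f(\tau)$. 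The key point is that $r'$ is a fixed reward that does not depend on $\theta$. So the GReinSS reward is exactly $r'$ divided by its on-policy mean, and the $\theta$-dependence enters only through a single positive scalar.

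Next I plug this into the GReinSS gradient from \cref{thm:opt}. The gradient is not differentiated through $r$, so I can pull the normalizer out of the expectation:
\[
\mathbb{E}_\tau\!\left[r(\tau)\tfrac{d}{d\theta}\log\Pr(\tau\mid\theta)\right] = \frac{1}{\mathbb{E}_\tau[r'(\tau)]}\,\mathbb{E}_\tau\!\left[r'(\tau)\tfrac{d}{d\theta}\log\Pr(\tau\mid\theta)\right].
\]
This is the standard REINFORCE gradient for the fixed reward $r'$, rescaled by the positive factor $1/\mathbb{E}_\tau[r']$. It therefore has the same direction as the ordinary policy gradient of $\mathbb{E}_\tau[r'(\tau)]$. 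In fact it equals $\frac{d}{d\theta}\log \mathbb{E}_\tau[r'(\tau)]$, which is consistent with \cref{thm:opt}, since $\log\Pr(X_{1:N}\mid\theta) = N\log Z(\theta)$ and $Z(\theta) = \mathbb{E}_\tau[r'(\tau)]/N$.

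The step I expect to be the main obstacle is making precise what ``standard reward normalization'' means. I would take it to mean dividing rewards by their batch mean, as in the sample estimator where $Z(\theta) \approx \frac{1}{M}\sum_{j=1}^M f(\tau_j)$. With that reading, the Monte Carlo GReinSS reward $N f(\tau_j)/\big(N\cdot\frac{1}{M}\sum_{k=1}^M f(\tau_k)\big)$ is literally $r'(\tau_j)$ divided by the batch mean of $r'$. The estimator GReinSS computes then coincides sample-by-sample with normalized REINFORCE.

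I would also note the remaining conventions. A common constant factor such as $N$ cancels, and in any case a positive constant can be absorbed into the learning rate. If normalization is instead read as including mean subtraction, I would add the remark that subtracting a constant baseline leaves the expected gradient unchanged, because $\mathbb{E}_\tau[\frac{d}{d\theta}\log\Pr(\tau\mid\theta)] = 0$.
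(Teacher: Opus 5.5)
Your argument is the same as the paper's: under the hypothesis every numerator $\Pr(X_i\mid\tau)$ equals a common $f(\tau)$ and every denominator $\Pr(X_i\mid\theta)$ equals a common $Z(\theta)=\mathbb{E}_\tau[f(\tau)]$. The GReinSS reward is therefore a constant multiple of the mean-normalized naive reward $r'(\tau)/\mathbb{E}_\tau[r'(\tau)]$. The paper stops at this reward-level identity. Your extension through the policy gradient and the Monte Carlo estimator is a harmless elaboration of the same idea.

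Your central display has an arithmetic slip: $\sum_{i=1}^N f(\tau)/Z(\theta) = N f(\tau)/Z(\theta)$, not $N f(\tau)/(N Z(\theta))$. Since $r'(\tau)=Nf(\tau)$ and $\mathbb{E}_\tau[r'(\tau)] = N Z(\theta)$, the correct identity is $r(\tau) = N\, r'(\tau)/\mathbb{E}_\tau[r'(\tau)]$. This is the paper's $r'_N(\tau) = r(\tau)/N$.

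The missing $N$ propagates through the rest of your computation:
\begin{itemize}
\item the gradient display should carry the factor $N/\mathbb{E}_\tau[r'(\tau)]$;
\item the GReinSS gradient equals $N\,\frac{d}{d\theta}\log\mathbb{E}_\tau[r'(\tau)]$;
\item the Monte Carlo reward $N f(\tau_j)/\bigl(\frac{1}{M}\sum_{k} f(\tau_k)\bigr)$ is $N$ times the batch-normalized $r'(\tau_j)$, not equal to it.
\end{itemize}

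Your own consistency check should have caught this. With $\log\Pr(X_{1:N}\mid\theta) = N\log Z(\theta)$ and $Z(\theta) = \mathbb{E}_\tau[r'(\tau)]/N$, \cref{thm:opt} gives $N\,\frac{d}{d\theta}\log\mathbb{E}_\tau[r'(\tau)]$. That contradicts your claimed value unless $N=1$.

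None of this breaks the lemma. It asserts equivalence only up to a positive constant, which is exactly how the paper concludes. Your closing remark that a constant factor can be absorbed into the learning rate is the right fix. Carry the $N$ explicitly instead of claiming exact sample-by-sample equality.
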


Intuitively, the denominator $\Pr(X_i \mid \theta)$ in the GReinSS reward function \eqref{eq:reward}
% $r(s) = \sum_{i=1}^N \Pr(X_i \mid \tau) / \Pr(X_i \mid \theta)$ 
helps ensure the trajectory probabilities are balanced across $X_{1:N}$  rather than being overly narrow (as illustrated in \cref{app:toy} and \cref{fig:toy}). 
However, when $\Pr(X_i \mid \theta)$ is the same for all $i$, this term becomes irrelevant and the reward can simplify to $\sum_{i=1}^N \Pr(X_i \mid \tau)$. 
We define \emph{naive policy gradients} as a simple ablation of GReinSS, where we use the reward function $r'(\tau) = \sum_{i=1}^N \Pr(X_i \mid \tau)$, removing the denominator $\Pr(X_i \mid \theta)$. 
As an alternative baseline, one can better balance probabilities between trajectories without the denominator term $\Pr(X_i \mid \theta)$ by using trajectory balance GFlowNets~\cite{malkin2022trajectory}.
This allows the probability to be balanced across trajectories $\tau$, supporting multiple observations $X_i$ while still using the predefined reward function $r'(\tau) = \sum_{i=1}^N \Pr(X_i \mid \tau)$. 
GFlowNets solve Problem~\ref{prob:dist} optimally in the following special case. 

\begin{lemma}
    Let, for each $i \in [N]$, $\Pr(X_i \mid \tau) = 1$ for exactly one trajectory $\tau$ and $0$ for all other trajectories.
    Then, the optimal GFlowNets distribution $\Pr(\tau \mid \theta)$ is also the optimal solution to Problem~\ref{prob:dist}. 
\label{lem:GFlow}
\end{lemma}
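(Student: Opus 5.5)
Write $\tau^*_i$ for the unique trajectory with $\Pr(X_i \mid \tau^*_i)=1$. The plan is to compute the optimum of Problem~\ref{prob:dist} explicitly, compute the distribution a trajectory-balance GFlowNet converges to, and check that they coincide.

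\textbf{Optimum of Problem~\ref{prob:dist}.} By \eqref{eq:obsTau}, $\Pr(X_i \mid \theta) = \sum_\tau \Pr(X_i\mid\tau)\Pr(\tau\mid\theta) = \Pr(\tau^*_i \mid \theta)$. So the objective is $\log \Pr(X_{1:N}\mid\theta) = \sum_{i=1}^N \log \Pr(\tau^*_i\mid\theta)$, which mirrors Lemma~\ref{obs:maxLike} with trajectories in place of states.

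Group the indices by distinct trajectory. Let $T$ be the set of distinct trajectories among $\tau^*_1,\ldots,\tau^*_N$, and for $\tau\in T$ let $n_\tau = |\{i : \tau^*_i = \tau\}|$. The objective becomes $\sum_{\tau\in T} n_\tau \log p_\tau$ with $p_\tau = \Pr(\tau\mid\theta)$, subject to $\sum_\tau p_\tau \le 1$.

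Maximize this over the probability simplex, assuming the policy class is expressive enough to realize any distribution over trajectories. A Lagrange multiplier argument, or Gibbs' inequality, gives the empirical distribution $p_\tau = n_\tau / N$ on $T$ and zero mass elsewhere.

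\textbf{Optimum of the GFlowNet.} The reward is $r'(\tau) = \sum_i \Pr(X_i\mid\tau) = n_\tau$ for $\tau \in T$ and $0$ otherwise. At its optimum, a trajectory-balance GFlowNet~\cite{malkin2022trajectory} samples trajectories with probability proportional to reward, so $\Pr(\tau\mid\theta) = r'(\tau)/\sum_{\tau'} r'(\tau') = n_\tau / N$. This is the same distribution as above, which proves the claim.

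\textbf{Main obstacle.} The delicate step is the definition of ``optimal GFlowNets distribution''. Trajectory balance guarantees proportionality to reward at the level of terminal states, with the backward policy distributing mass among trajectories that reach the same state.

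Here the reward is defined per trajectory and $\Pr(X_i\mid\tau)=\Pr(X_i\mid S(\tau))$, so uniqueness of $\tau^*_i$ forces it to be the only trajectory reaching its terminal state with nonzero reward. I would therefore argue at the level of terminal states: each rewarded state is reached by a single trajectory, so state-level and trajectory-level proportionality coincide. I would also note that zero-reward trajectories get probability zero in both solutions, taking the limit where rewards are strictly positive if needed.
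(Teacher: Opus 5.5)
Your proof is correct and follows essentially the same route as the paper: reduce the objective to $\sum_i \log\Pr(\tau^*_i\mid\theta)$, observe that it is maximized by the empirical distribution $n_\tau/N$, and note that the reward-proportional GFlowNet optimum $r'(\tau)/\sum_{\tau'} r'(\tau') = n_\tau/N$ is that same distribution. Your extra remarks fill in points the paper's proof leaves implicit: the assumption that the policy class can realize any distribution, the positivity of rewards, and the fact that uniqueness of $\tau^*_i$ together with $\Pr(X_i\mid\tau)=\Pr(X_i\mid S(\tau))$ makes state-level and trajectory-level proportionality coincide.
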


For this case, Problem~\ref{prob:pred} may be solved 
% by sampling $S = S(\tau)$ by sampling $\tau$ from $\Pr(\tau \mid \theta)$
by sampling $\tau$ from $\Pr(\tau \mid \theta)$
and then predicting states $S=S(\tau)$ that maximize $\Pr(S \mid \theta) \Pr(X_i \mid S)$ (see \cref{app:inference}).

\section{Results}
\label{sec:results}

%We demonstrate the advantages of our method on both a very generic directed graph generation Problem in section~\ref{sec:generic} and a simplified simulation of a meaningful chemical Problem in 

\subsection{Simulations}
\label{sec:sim}

We compare GReinSS to the baseline methods listed in \cref{tab:method} on two simulation experiments, where the latent states correspond to distinct combinatorial objects, namely graphs and sets.
% The first experiment demonstrates GReinSS' capability to accurately reconstruct latent states from very indirect and complex observations.
% The second experiment demonstrates that GReinSS is able to scale to large instances using off-policy learning.
% The second experiment demonstrates how 
% - Graph experiment: observations are very indirect 
% - Set experiment: states very diverse, observations very direct/informative 
%The first experiment focuses on inferring 
In the first experiment, our latent states are directed graphs representing some hidden process, and our observations are lists of start and end points of random walks in these directed graphs (\cref{fig:simGraph}a). 

\begin{problem}[\textsc{Process Graph Inference from Random Walk Endpoints}]
% from random walk start and end points}]
\label{prob:graph}
% Given observations $X_{1:N}$ each composed of $k$ pairs of start and end vertices of $k$ random walks from the respective directed graphs $S^*_{1:N}$ drawn from an unknown distribution $\Pr^*(S)$, find the $N$ underlying directed graphs $S^*_{1:N}$.
% from the distribution $\Pr^*(S)$ that the observations $X_{1:N}$ were derived from. 
Find the directed graphs $S^*_{1:N}$ drawn from an unknown distribution $\Pr^*(S)$ given observations $X_{1:N}$ with each $X_i$ composed of $k$ pairs of start and end vertices of $k$ absorbing random walks from $S^*_i$. 
% the respective directed graph 
% $S^*_i$.
% S^*_1,\ldots,S^*_N$.
%Absorbing Markov chain
%$k$ pairs of start and end vertices of $k$ random walks from $S^_i$, where $S^*_i$ is treated as an absorbing Markov chain with all vertices having one directed edge to the termination state. 
\end{problem}

%\begin{figure*}[t]
%\vskip 0.2in
%\centering
%\centerline{\includegraphics[width=\textwidth]{images/sim.pdf}}
%\caption{ \textbf{GReinSS outperforms all baseline methods on simulated data.} 
% Panels b, d, and e have a break in the $y$ axis scale at $0.85$ to highlight differences between high-accuracy methods. 
%\textbf{a-b}~Process graph inference results.
%\textbf{a}~GReinSS achieves the highest median $F_1$ score.
%\textbf{b}~The difficulty is inversely proportional to the number of observed paths, but GReinSS maintains the top performance.  \mek{green/orange overlap}
%\textbf{c-e}~Set reconstruction results. 
%\textbf{c}~GReinSS achieves the highest median $F_1$ score.
%\textbf{d}~The difficulty is proportional to the level of noise, but GReinSS maintains the top performance. 
%\textbf{e}~ GEM-based baselines (autoregressive, VAE, and diffusion) do not scale to large sets effectively, with only GReinSS achieving a high performance.  
%\mek{refer to corresponding supp figures}
%}
%\label{fig:sim}
%\end{center}
%\vskip -0.2in
%\end{figure*}

The absorbing random walks are unbiased random walks through the graph $S^*_i$ with a single terminating self-loop on each node, as done in~\cite{wu2012learning}. 
This very general problem could represent inferring transportation networks~\cite{biagioni2012inferring}, biochemical process graphs~\cite{friedman2000using}, information transmission graphs~\cite{gomez2012inferring}, etc. %\meknote{add two more citations, one after each specific application} 
To apply the GReinSS framework, we must recast the above problem by (i) parameterizing a procedure for generating latent state graphs, and (ii) specifying the probability $\Pr(X_i \mid S)$ of generating our indirect observations.
Rather than assuming any knowledge on the distribution  $\Pr^*(S)$ of graphs, we define $\theta$ as the parameters of a neural network that generates graphs $S$ by sequentially adding directed edges to an initially empty graph, ending with a termination action (model details provided in \cref{app:arch}).
% We utilize a standard absorbing Markov Chain computation~\cite{ross2014introduction} to calculate

The probability $\Pr(X_i \mid S)$ is the product of the start and end probabilities that follow from the inverse shifted Laplacian matrix $(L+I)^{-1}$ as described in~\cite{wu2012learning}. 
%The probability $\Pr(X_i \mid S)$ corresponds to entries of the inverse shifted Laplacian matrix $(L+I)^{-1}$ resulting from the underlying random walk process~\cite{lovasz1993random} (details in \cref{app:startEndProb}). \meknote{not 100\% sure} 
As such, we can solve Problem~\ref{prob:graph} naturally using GReinSS by splitting it into a learning problem of identifying parameters $\theta$ generating the distribution of graphs $\Pr(S \mid \theta)$ (Problem~\ref{prob:dist}), and an inference problem of estimating each $S^*_i$ given $X_i$ and $\theta$ (Problem~\ref{prob:pred}). 
% \mek{Say that there are two ingredients to apply GReinSS. Mention that we have knowledge on the generative process of $X_i$ given $S$ --- refer to appendix.}
We generate three simulation instances each composed of $N=1000$ observations but a varying number $k\in\{10,100,1000\}$ of random walks.
To do so, we begin by generating a base graph via Erdős–Rényi with an edge inclusion probability $1/2$~\cite{erdHos1960evolution}. % \meknote{add citation}.
Next, we assign a weight to each edge sampled from $U(1/4,1)$ (with the lower limit set to $1/4$ to ensure edge recurrence across latent graphs $S^*_i$).
We apply weight thresholding with thresholds sampled from $U(0, 1)$ to generate $1000$ latent states $S^*_{1:1000}$. 
Finally, we generate $k$ random walks recording the pair $(v,w)$ of vertices corresponding to a start node $v$ sampled uniformly at random and the ending node $w$ obtained via an unbiased random walk process (\cref{app:sim}).
%via a random walk through the absorbing Markov chain defined by the state graph. 

% We vary the number of random walks $k$ per graph to be $1000$, $100$, and $10$.

% The latent graphs $S^*_i$ in each simulation instance contain different subsets of an unobserved base graph generated via Erdős–Rényi with an edge inclusion probability $1/2$. 
% Specifically, each directed edge of this base graph is assigned a weight sampled from $U(1/4,1)$ (with the lower limit set to $1/4$ to avoid base graph edges that don't occur in any latent graph $S^*_i$), and weight thresholding is applied with thresholds sampled from $U(0, 1)$ to generate $1000$ latent states $S^*_{1:1000}$. 
% Random walks $X_i$ are generated without restart using a uniform probability of each starting node and a termination probability inversely proportional to the out-degree of the vertex (details in \cref{app:sim}).

\begin{figure}[t]
% \vskip 0.2in
\begin{center}
\centerline{\includegraphics[width=\columnwidth]{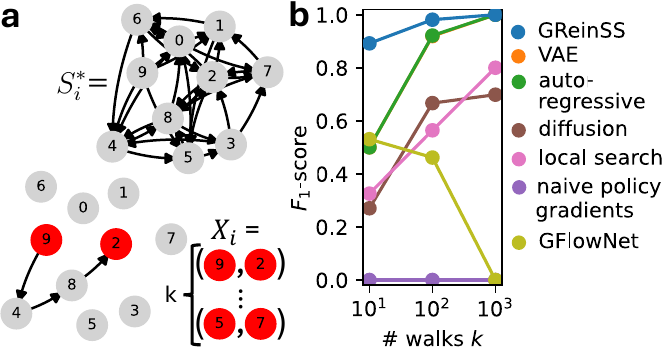}}
\caption{ \textbf{GReinSS outperforms all baseline methods in simulated process graph inference.}
\textbf{a}~An example latent graph $S^*_i$ and observation $X_i$ consisting of the start and end points of $k$ random walks.
\textbf{b}~$F_1$-score as a function of the number $k$ of random walks.
% , outperforming baseline methods.
}
\label{fig:simGraph}
\end{center}
\vskip -.4in
\end{figure}

We measure graph reconstruction accuracy as an $F_1$ score, i.e.\ the harmonic mean of precision and recall of directed edge sets (\cref{app:Fscore}). %\meknote{put this in the appendix more precisely, and add reference to appendix here}
As shown in \cref{fig:simGraph}b and \cref{fig:graphWalks}, we find that GReinSS consistently achieves the highest median $F_1$ score, followed by the GEM-based baselines using VAE and autoregression.
These latter two methods perform very similarly, indicating they may both be effectively solving the GEM M-step, with their performance fundamentally limited by the GEM framework.
With the exception of GFlowNets and naive policy gradients, the $F_1$ scores consistently increase with the number $k$ of random walks, showing most approaches can be effective with sufficiently informative observations.
Strikingly, for $k = 10$, GReinSS has the highest median $F_1$ score of $0.891$, with all baselines having median $F_1$ scores below $0.55$ due to the small amount of information per observation.
Despite naive policy gradients and GFlowNets being the most structurally similar baseline methods to GReinSS, they perform very poorly with median $F_1$ scores consistently below $0.55$.
Specifically, Naive Policy Gradient consistently predicted an empty graph, achieving a high reward for a small number of observations but an $F_1$ score of $0$. 
% Specifically, for NPG the median $F_1$ 
% (median $F_1$ scores  of $0$ and $0.46$, respectively). 
This demonstrates that while the dynamic reward function is a relatively small algorithmic change to policy gradients, it is absolutely essential for achieving strong results.

In the second experiment, our latent states are a family $S^*_{1:N}$ of $N=100$ subsets from a universe $\mc{U}$, and observations $X_{1:N}$ are noisy real-valued vectors indicating which elements are more likely to be present in each set (\cref{fig:simSet}a).

\begin{problem}[\textsc{Subset Inference from Noisy Element Measurements}]
\label{prob:set}
Find the family $S^*_{1:N}$ of subsets drawn from an unknown distribution $\Pr^*(S)$ on a fixed universe $\mc{U}$, given observations $X_{1:N}$ each composed of a vector of length $|\mc{U}|$ with $X_{i, j}$ drawn from a known distribution $D^+$ if $j \in S^*_i$ and $X_{i, j}$ drawn from $D^-$ otherwise.
% with each $X_i$ composed of of vector of length $|\mc{U}|$ with $X_{i, j}$ drawn from $\mc{D}^+(1, \sigma^+)$ if $j \in $
% For $i \in [N]$, let each state be a set $S_i \subset [n_v]$ of elements from a universe of $n_v$ elements. 
% Let $V(S_i)$ be a length $n_v$ vector with the $j$th element equal to $1$ if $j \in S_i$ and $0$ otherwise. 
% For $i \in [N]$, let $X_i = V(S_i) + Z_i$ where $Z_i$ is a length $n_v$ random Gaussian vector with mean $0$ and standard deviation $\sigma$.
% Solve Problems \ref{prob:dist} and \ref{prob:pred} to estimate $S^*_{1:N}$. 
\end{problem}

The above problem where noisy observations indicate which conditions are more likely true for each data point arises in many scientific contexts, including detecting chemicals from noisy mass spectrometry intensities~\cite{aebersold2003mass}, determining active genes from noisy expression data~\cite{schena1995quantitative}, etc.
This problem fits naturally within the GReinSS framework:
Latent states $S$ are generated by adding elements from $\mc{U}$ iteratively, starting with the empty set $\emptyset$, ending with a termination action using a neural network with parameters $\theta$ described in \cref{app:arch}.
The probability distribution $\Pr(X_i \mid S)$ is derived trivially from the given distributions $D^+$ and $D^-$ as $\prod_{ j \in S } D^+(X_{i, j}) \prod_{ j \in \mc{U} \setminus S } D^-(X_{i, j})$.
% $\prod_{j \in \mc{U}}  D^+(X_{i, j})^{\mathbf{1}\{j \in S_i\}} D^-(X_{i, j})^{\mathbf{1}\{j \not \in S_i\}}$
% $\prod_{ j \in S_i } D^+(X_{i, j}) \prod_{ j \in \mc{U} \setminus S_i } D^-(X_{i, j})$
As with Problem~\ref{prob:graph}, we solve Problem~\ref{prob:set} in a two-step fashion by applying GReinSS to first solve Problem~\ref{prob:dist} followed by Problem~\ref{prob:pred}.

We generate simulation instances with varying noise levels $\sigma \in \{0.1,0.2,0.3,0.4,0.5\}$ and varying universe sizes $|\mc{U}| \in \{10,100,1000\}$.
Specifically, we model the distributions $D^+$ and $D^-$ as Gaussian distributions $N(1, \sigma^2)$ and $N(0, \sigma^2)$ using the specified noise level $\sigma$ as the standard deviation. 
We simulate the latent states $S^*_i$ using a process inspired by dictionary learning~\cite{aharon2006k}, in which each set $S^*_i$ is the union of random reusable module subsets~\cite{mairal2009online} with details provided in \cref{app:setSim}. 
The state-generating process $\Pr^*(S)$ results in states that exhibit shared structural patterns while maintaining a high degree of random variation.
The combination of a widely dispersed and highly randomized state-generation process $\Pr^*(S)$ and very informative observations $X_i$ (especially for small $\sigma$) makes off-policy learning extremely valuable.
While it is hard to sample exactly from the optimal off-policy sampling distribution $\Pr(S \mid X_{1:N}, \theta)$, one can bias the sampling $\Pr(S \mid \theta)$ towards $\Pr(S \mid X_{1:N}, \theta)$ using the observations $X_{1:N}$ as described in \cref{app:simOffPolicy}.
%Details of the off-policy sampling procedure are described in Supplemental Section~\ref{app:simOffPolicy}.
We use off-policy learning for all policy learning methods, including GReinSS, naive policy gradients, and GFlowNets. 
\ste{As shown in \cref{fig:sensitivity}, the accuracy of GReinSS is not sensitive to the exact off-policy sampling proposal so long as it biases sampling toward $\Pr(S \mid X_{1:N}, \theta)$.}

%\cref{fig:sim}c shows the $F_1$ scores of each method with $\sigma=0.3$, with GReinSS having the highest $F_1$ score of $1.0$, followed by VAE, autoregressive, and local search ($F_1$ scores of $0.944$, $0.930$ and $0.918$ respectively). 
\begin{figure}[t]
% \vskip 0.2in
\begin{center}
\centerline{\includegraphics[width=\columnwidth]{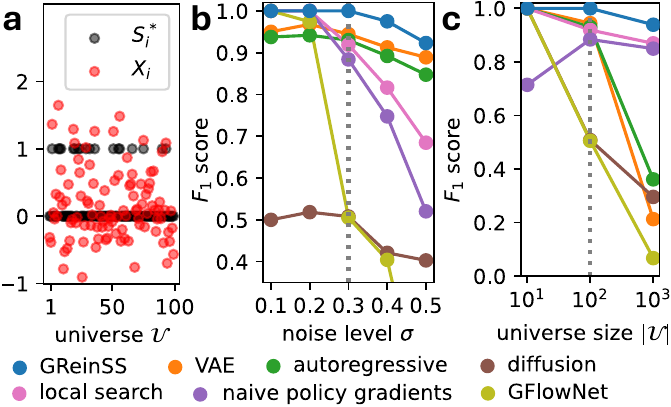}}
\caption{ \textbf{GReinSS outperforms all baseline methods in simulated subset inference.}
\textbf{a}~An example set $S^*_i$ and noisy observation $X_i$. 
\textbf{b}~GReinSS outperforms baselines for all noise levels $\sigma$.
\textbf{c}~GReinSS scales to large universes $\mc{U}$ unlike baseline methods. 
%\mek{[in b, x-axis should say `noise level $\sigma$'; and `universe size $|\mc{U}|$']}
}
\label{fig:simSet}
\end{center}
\vskip -0.4in
\end{figure}

We assess performance using $F_1$ scores, comparing ground-truth subsets $S^*_i$ to predicted subsets $\hat{S}_i$.
For simulation instances with a universe with $|\mc{U}|=100$ elements, we find that GReinSS consistently outperforms the baseline methods across varying noise levels (\cref{fig:simSet}b and \cref{fig:setLength}).
For low $\sigma <0.3$, the best alternatives are naive policy gradients, local search, and GFlowNets, whereas for high $\sigma>0.3$, the best alternatives are VAE and autoregressive.  
This illustrates how for low $\sigma$ the key factor is utilizing the observations either via off-policy learning or local search, whereas for high $\sigma$ the key factor is effectively optimizing $\Pr(X_{1:N} \mid \theta)$ using either GReinSS or GEM. 
Additionally, we analyze the set reconstruction $F_1$ score while varying the size $|\mc{U}|$ of the universe but keeping $\sigma = 0.3$ (\cref{fig:simSet}c and \cref{fig:setSigma}). 
For tiny vectors of $|\mc{U}|=10$ elements, all methods other than naive policy gradients achieve a perfect median $F_1$ score of $1.0$.
However, all baseline methods other than local search and naive policy gradients scale catastrophically to universe sizes of $|\mc{U}|=1000$ (median $F_1 < 0.4$). 
Meanwhile, for universe sizes of $|\mc{U}|=1000$, GReinSS achieves a median $F_1$ score of $0.938$, while local search and naive policy gradients achieve median $F_1$ scores of $0.869$ and $0.849$, respectively.
This demonstrates that simple approaches are sufficient for very small universe sizes, but only GReinSS effectively scales to large vectors. 
In \cref{fig:onPolicy}, we analyze policy learning methods without off-policy learning, showing that GReinSS still performs best.

\subsection{RNA Splicing from Short Read RNA-seq Data}
\label{sec:RNA}

%\begin{figure*}[t]
%\vskip 0.2in
%\centering
%\centerline{\includegraphics[width=\textwidth]{images/splicing.pdf}}
%\caption{ \textbf{GReinSS outperforms RSEM in predicting RNA isoforms from short-read sequencing data.}    
%}
%\label{fig:RNA}
%\end{center}
%\vskip -0.2in
%\end{figure*}

Cells produce proteins by transcribing DNA into precursor RNA, splicing it into a mature RNA transcript, and translating it into a protein~\cite{alberts1994molecular}. 
While transcription and translation are mostly deterministic, splicing is a highly variable process that selects and connects contiguous nucleotide segments, called \emph{exons}.
As such, this process yields multiple distinct RNA molecules, called \emph{isoforms}, from the same gene~\cite{wang2008alternative} as shown in \cref{fig:RNA}a.
% Short-read sequencing is widely used to analyze 
Long-read RNA sequencing directly observes full-length isoforms but is expensive~\cite{tilgner2015comprehensive, au2013characterization}. 
In contrast, inexpensive short-read RNA sequencing produces short (around $100$ nucleotide) reads from mature transcripts. %is inexpensive and widely used, but
As such, isoforms and their proportions must be inferred indirectly from these reads~\cite{mortazavi2008mapping, trapnell2010transcript, li2011rsem}. 
Here, we propose to use GReinSS for this problem as follows.

\begin{problem}[\textsc{Isoform Proportion Inference from Read Counts}]
Given a set $X_{1:N}$ of reads aligned to one gene across $M$ sequencing samples, with each read $X_i$ consisting of a sample index and genomic position, estimate the distribution $\Pr^*(S)$ of isoforms in each of the $M$ samples.
%Given a set $X_{1:N}$ of aligned reads across $M$ sequencing samples, with each read $X_i$ consisting of a sample index and genomic position, find .
\end{problem}

% A diagram of an isoform $S_i$ and observed junction covering read $X_i$ is shown in \cref{fig:RNA}b. 
To apply the GReinSS framework, we first (i) parameterize a procedure for generating latent state isoforms and (ii) specify the probability $\Pr(X_i \mid S)$ of generating our indirect observations.
We cast each latent state $S$ as a tuple containing the genomic regions (exons) of the isoform, the sample index of the isoform, and the genomic position of the read $X_i$ produced from this isoform (\cref{fig:RNA}b).
The isoform is generated using a neural network with parameters $\theta$ by iteratively selecting transitions (junctions) between included contiguous genomic regions (exons), constructing an isoform as described in \cref{app:splicingGen}.
% \mek{Split previous sentence and describe generative process in more detail}
% Defining the latent state $S$ to contain the genomic position of the read produced from the isoform.
Our formulation of latent states implies $\Pr(X_i \mid S) = 1$ if the genomic position and sample index of $S$ and $X_i$ agree, and $0$ otherwise.

\begin{figure}[t]
% \vskip 0.2in
\begin{center}
\centerline{\includegraphics[width=\columnwidth]{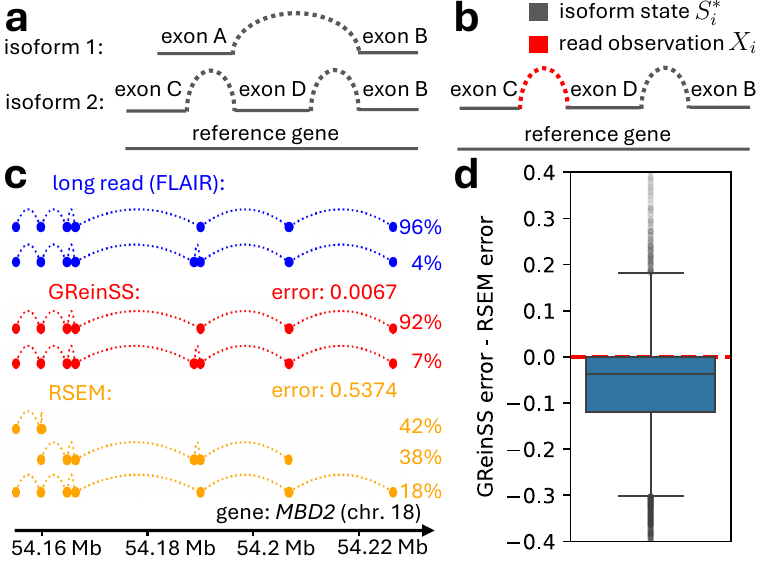}}
\caption{  \textbf{GReinSS outperforms RSEM in predicting RNA isoforms from short-read sequencing data.} 
    %\textbf{a}~Total read counts across samples of whole isoform/transcript covering long reads and junction covering short reads are shown for the $14{,}390$ used genes.  
    \textbf{a}~A diagram of two distinct isoforms of one gene, with solid lines for exons and dashed lines for junctions (transitions between exons). 
    \textbf{b}~A diagram of an observation read $X_i$ covering some junction in the isoform $S^*_i$. 
    \textbf{c}~The isoforms detected by long-read sequencing, as well as those predicted by GReinSS and RSEM, are shown for an example gene (\emph{MBD2}) in one cultured fibroblast sample.
    Dots show exons and dashed lines show (much longer) junctions. 
    %Dots show the position of exons (with intervals too short to be seen), and dashed lines show the (much longer) junctions connecting these exons. 
    Unlike RSEM, GReinSS reconstructs the same two isoforms identified in the long-read data with very similar proportions.
    % the correct two isoforms with proportions ($92$\% and $7$\%) very similar to the true long read sequencing proportions ($96$\% and $4$\%), whereas RSEM predicts multiple incorrect isoforms. 
    \textbf{d}~The GReinSS error minus the RSEM error is shown for all $14{,}390$ genes,  with a median difference in errors of $-0.0405$. %The black line at $0$ represents equal errors, and the red line at $-0.0718$ is the average difference in errors. 
    %\mek{Include one more panel for a single gene, showing long-read ground truth, RSEM isoforms and our GReinSS isoforms}
}
\label{fig:RNA}
\end{center}
% \vskip -0.4in
\end{figure}

To evaluate GReinSS on this problem, we utilize the GTEx database~\cite{lonsdale2013genotype}, containing $17{,}371$ human tissue samples with short-read sequencing data, of which $61$ also have matched long-read sequencing data. 
Specifically, GTEx contains: 
% This database provides several publicly available resources derived from this data. 
% It provides 
(i) short-read sequencing junction-overlapping read counts calculated using the STAR~\cite{dobin2013star} aligner;
(ii) isoforms and their proportions estimated from short-read sequencing data using RSEM~\cite{li2011rsem}; and 
(iii) isoforms and their proportions estimated from long-read sequencing data calculated using FLAIR~\cite{tang2020full}.
We use (i) as input to GReinSS, (ii) as a baseline method, and (iii) as ground-truth for evaluation.
Note that the baseline method RSEM is a commonly used expectation maximization-based algorithm for isoform quantification, which, similarly to GReinSS, uses the same splice-aware read alignments as input.
\ste{We also include comparisons with the naive policy gradients and GFlowNets ablations in \cref{fig:RNAablation}, showing that these ablations underperformed both GReinSS and RSEM.}
We focus our analysis on the $M=61$ tissue samples with matched long-read data.
Moreover, we restrict our analysis to a total of $14{,}390$ human genes that had a sufficient number ($\geq 100$) of junction-overlapping reads in the short-read data as well as isoform-covering reads in the long-read data.
% We use 
% that have both long read sequencing data and at least $100$ junction overlapping reads across $61$ tissue samples in the short-read sequencing data (read counts shown in \cref{fig:RNA}a).
% GTeX also provides isoform quantities estimated from short-read sequencing data using RSEM~\cite{li2011rsem}, a commonly used expectation maximization-based algorithm for isoform quantification.
% We therefore use RSEM as a baseline method to compare GReinSS with, rather than implementing expectation maximization-based baselines ourselves. 
%Previously discussed machine learning baselines were analyzed in \ref{app:RNAbaseline}, showing poor performance for these baselines. 

To evaluate prediction accuracy, we must account for (i) errors in the exons included in each isoform, (ii) errors in predicted proportions, and (iii) the varying long-read sequencing support across samples. 
We achieve this by utilizing (i) pairwise Jaccard distance between loci covered by isoforms, (ii) optimal transport across isoform proportions, and (iii) a sample average weighted by long-read support (details and an example provided in \cref{app:splicingError} and \cref{fig:JaccardExample}).
Briefly, the \emph{isoform prediction error} is $0$ if correct isoforms and proportions are predicted across all samples and $1$ if all predicted isoforms have zero overlap with all long-read-supported isoforms.
%(i) errors in the exons included in each isoform, (ii) errors in predicted proportions, and (iii) the varying long read sequencing support across samples. 
%a custom distance metric between distributions of isoforms that takes int. 
%Specifically, we utilize the Jaccard distance as described in  \cref{app:splicingError} to define the distance between two isoforms in terms of comprising exons. 
%In short, isoforms with more overlap have a smaller Jaccard distance, with non-overlapping isoforms having a distance of $1$, and identical isoforms having a distance of $0$.
%To additionally account for the proportion of each isoform, we utilize optimal transport~\cite{villani2008optimal} to calculate the distance between two distributions of isoforms given the Jaccard distance matrix between all isoforms in the distributions (\cref{app:splicingError}). 
%Finally, we calculate a total distance via a weighted average across samples (\cref{app:splicingError}). 

\cref{fig:RNA}c shows the isoform reconstruction for the gene \emph{MBD2} in one cultured fibroblast sample (``GTEX-QV44-0008-SM-447AX'').
% of long read sequencing detected isoforms, GReinSS predicted isoforms, and RSEM predicted isoforms for one gene (\emph{MBD2}) for one Cultured fibroblast sample (``GTEX-QV44-0008-SM-447AX'').  %Skin	Cells - Cultured fibroblasts. Our score 0.0067619992155009395, RSEM score 0.5373673712310738 %I used this sample cuz it has the largest long read seq read count
The ground-truth long-read sequencing detected one isoform with $96\%$ proportion and one isoform with $4\%$ proportion, which we will refer to as the major and minor isoform, respectively.
These isoforms are similar with a Jaccard distance of $0.1495$. 
For GReinSS, the major isoform has $92\%$ proportion and the minor isoform has $7\%$ proportion (with other isoforms totaling $<1\%$).
Applying optimal transport gives an error of $0.0067$ due to a slight proportion inaccuracy on the two highly similar correct isoforms. 
For RSEM, the major isoform has $18\%$ proportion, with other highly distinct isoforms totaling $82\%$ proportion, resulting in an error of $0.5374$. 
Taking an average across $61$ samples weighted by the long-read sequencing counts gives an error of $0.0069$ for GReinSS and $0.471$ for RSEM, leading to a difference of $0.0069 - 0.471 = -0.4641$, showing that the isoforms identified by GReinSS better match the long-read ground truth. % compared to RSEM.
Looking across all genes, \cref{fig:RNA}d and \cref{fig:ExtraSpliceError} show that GReinSS outperforms RSEM much more frequently than vice versa. 
Specifically, on $46.6\%$ of genes GReinSS outperforms RSEM by at least $0.05$, whereas only on $9.4\%$ of genes RSEM outperforms GReinSS by at least $0.05$.
This demonstrates how a straightforward application of GReinSS can improve upon the techniques being used in major real-world applications. 

\section{Conclusion}

In this work, we introduced GReinSS, a novel framework for learning distributions over discrete latent states from indirect observation data.  %Removed policy learning
GReinSS directly optimizes the observation data likelihood without utilizing ground-truth latent states. %general policy learning
To achieve this, GReinSS dynamically rescales rewards so that the policy gradient is an unbiased estimator of the observation data log-likelihood gradient. %gradient of the
Consequently, GReinSS allows policy learning to be an effective alternative to expectation maximization for combinatorially large latent spaces where an exact implementation of expectation maximization is infeasible. 

On simulated latent graph and latent set inference problems, GReinSS consistently outperforms baseline methods, including (naive) policy gradients, GFlowNets, local search, and GEM-based implementations of VAEs, autoregressive models, and discrete diffusion. 
In particular, the poor performance of the naive policy gradients ablation of GReinSS shows the vital importance of GReinSS's dynamic reward function.
%These results highlight that the key advantage of GReinSS does not arise from architectural complexity, but rather from aligning the policy learning objective exactly with the marginal likelihood of the observations.
On the biological task of isoform inference from short-read RNA sequencing data, GReinSS outperforms the standard EM-based algorithm RSEM used in the GTEx database.
This illustrates the real-world effectiveness of GReinSS on practical problems beyond prior works~\cite{ivanovic2023modeling, ivanovic2025cnrein}. 

% In future works, GReinSS could naturally be applied to a plethora of real-world problems involving noisy observation data and unobserved discrete latent states.
% Such problems include identifying chemicals present in mass spectrometry datasets, inferring social networks from online interactions, , and . 
Beyond applying the GReinSS framework to other real-world problems, there is great potential for methodological extensions. 
First, GReinSS currently uses policy gradients, but could be extended to use Q learning~\cite{watkins1992q} or the actor-critic framework~\cite{sutton1999policy}. %\meknote{add citations}
Second, although we derived the optimal off-policy sampling distribution (\cref{lem:sampling}), this and prior work rely on heuristically defined off-policy sampling~\cite{ivanovic2023modeling, ivanovic2025cnrein}.
Alternatively, one could train an auxiliary network to learn an off-policy sampling distribution that approximates the optimal distribution $q(\tau \mid X_{1:N}, \theta)$. 
Third, an approximate probability function $\Pr(X \mid S)$ of observations $X$ given some state $S$ is currently assumed to be known but could instead contain learned parameters to account for unknown aspects of observation generation. 
%For existing applications, the probability $\Pr(X \mid S)$ can be effectively estimated, but future problems may require parameterizing this distribution to account for unknown aspects of how the observations are generated. 
Fourth, more sophisticated forms of multi-environmental modeling could be implemented.
The isoform inference application utilizes a fairly simple multi-environment setup, but future extensions could utilize more complex differences in the distributions $\Pr(S \mid \theta)$ and $\Pr(X_i \mid S)$ across environments, including the addition of individualized environment-specific parameters. 
Overall, this work provides a starting point for a wide variety of future applications and extensions.

\section*{Acknowledgments}

This research was supported by National Science Foundation grant CCF-2046488, the Molecule Maker Lab Institute: An AI Research Institutes program supported by NSF under Award No. 2505932, and the DOE Center for Advanced Bioenergy and Bioproducts Innovation (U.S. Department of Energy, Office of Science, Biological and Environmental Research Program under Award Number DE-SC0018420). 
Any opinions, findings, and conclusions or recommendations expressed in this publication are those of the author(s) and do not necessarily reflect the views of the U.S. Department of Energy.

\section*{Impact Statement}

This paper introduces new machine learning methodologies with the primary goal of advancing the field of machine learning. 
This work is primarily applicable to problems involving latent-variable inference from indirect observations, including scientific and biological data analysis. 
Potential downstream impacts of this work may include improved analysis of biological systems or other forms of experimental data. 
We do not anticipate that this work introduces novel ethical concerns beyond those broadly applicable to the development of novel machine learning techniques, and we do not identify any specific societal impacts requiring special consideration. 

\section*{Code Availability}

Our implementation of GReinSS is available
at \url{https://github.com/elkebir-group/GReinSS}.

% In the unusual situation where you want a paper to appear in the
% references without citing it in the main text, use \nocite
%\nocite{langley00}

\bibliography{example_paper}
\bibliographystyle{icml2025}

%%%%%%%%%%%%%%%%%%%%%%%%%%%%%%%%%%%%%%%%%%%%%%%%%%%%%%%%%%%%%%%%%%%%%%%%%%%%%%%
%%%%%%%%%%%%%%%%%%%%%%%%%%%%%%%%%%%%%%%%%%%%%%%%%%%%%%%%%%%%%%%%%%%%%%%%%%%%%%%
% APPENDIX
%%%%%%%%%%%%%%%%%%%%%%%%%%%%%%%%%%%%%%%%%%%%%%%%%%%%%%%%%%%%%%%%%%%%%%%%%%%%%%%
%%%%%%%%%%%%%%%%%%%%%%%%%%%%%%%%%%%%%%%%%%%%%%%%%%%%%%%%%%%%%%%%%%%%%%%%%%%%%%%
\newpage
\appendix

\renewcommand\thefigure{S\arabic{figure}}
\setcounter{figure}{0}
\renewcommand\thetable{S\arabic{table}}
\setcounter{table}{0}

\onecolumn
%\section{You \emph{can} have an appendix here.}

%You can have as much text here as you want. The main body must be at most $8$ pages long.
%For the final version, one more page can be added.
%If you want, you can use an appendix like this one.  

%The $\mathtt{\backslash onecolumn}$ command above can be kept in place if you prefer a one-column appendix, or can be removed if you prefer a two-column appendix.  Apart from this possible change, the style (font size, spacing, margins, page numbering, etc.) should be kept the same as the main body.
%%%%%%%%%%%%%%%%%%%%%%%%%%%%%%%%%%%%%%%%%%%%%%%%%%%%%%%%%%%%%%%%%%%%%%%%%%%%%%%
%%%%%%%%%%%%%%%%%%%%%%%%%%%%%%%%%%%%%%%%%%%%%%%%%%%%%%%%%%%%%%%%%%%%%%%%%%%%%%%

\section{Proofs}
\label{app:proofs}

\begin{restatetheorem}{thm:opt}
The policy gradient $\mathbb{E}_\tau [ r(\tau)  \frac{d}{d\theta} \log(\Pr(\tau \mid \theta) ) ]$ with dynamically changing rewards 
\begin{equation}
    r(\tau) = \sum_{i=1}^N  \frac{\Pr(X_i \mid \tau)}{\Pr(X_i \mid \theta) }.
%\label{eq:reward}
\end{equation}
is an unbiased estimator of the gradient $\frac{d}{d\theta}\log(\Pr(X_{1:N} \mid \theta) )$ of the log-likelihood objective. 
That is,
\begin{align}
%\label{eq:GReinSS_policy_gradient}
    \frac{d}{d\theta}\log(\Pr(X_{1:N} \mid \theta) ) = \mathbb{E}_\tau [ r(\tau)  \frac{d}{d\theta} \log(\Pr(\tau \mid \theta) ) ].
\end{align}
\end{restatetheorem}
\begin{proof}
Let $\mathcal{T}$ be the set of all trajectories. 
We then have the following. 
    \begin{align}
    & \frac{d}{d\theta}\log(\Pr(X_1, \ldots X_N \mid \theta) )  
     = \frac{d}{d\theta} \sum_{i=1}^N \log( \sum_{\tau \in \mathcal{T}} \Pr(X_i \mid \tau) \Pr(\tau \mid \theta))  
     = \sum_{i=1}^N \frac{ \sum_{\tau \in \mathcal{T}} \Pr(X_i \mid \tau) \frac{d}{d\theta} \Pr(\tau \mid \theta) }{ \sum_{\tau \in \mathcal{T}} \Pr(X_i \mid \tau) \Pr(\tau \mid \theta)}  \\
    & = \sum_{\tau \in \mathcal{T}} \sum_{i=1}^N \frac{ \Pr(X_i \mid \tau) }{ \Pr(X_i \mid \theta) }  \frac{d}{d\theta} \Pr(\tau \mid \theta)  
     = \sum_{\tau \in \mathcal{T}} \sum_{i=1}^N  \frac{\Pr(X_i \mid \tau)}{\Pr(X_i \mid \theta) } \Pr(\tau \mid \theta) \frac{d}{d\theta} \log(\Pr(\tau \mid \theta) )  \\
    & = \sum_{\tau \in \mathcal{T}} \Pr(\tau \mid \theta) \left( \sum_{i=1}^N  \frac{\Pr(X_i \mid \tau)}{\Pr(X_i \mid \theta) } \right)  \frac{d}{d\theta} \log(\Pr(\tau \mid \theta) ) 
     = \mathbb{E}_\tau [ \left( \sum_{i=1}^N  \frac{\Pr(X_i \mid \tau)}{\Pr(X_i \mid \theta) } \right)  \frac{d}{d\theta} \log(\Pr(\tau \mid \theta) ) ] \\ 
     & = \mathbb{E}_\tau [ r(\tau)  \frac{d}{d\theta} \log(\Pr(\tau \mid \theta) ) ]
\end{align}
\end{proof}

\begin{restatetheorem}{lem:sampling}
The unbiased variance-minimizing off-policy sampling proposal is $q(\tau \mid X_{1:N}, \theta) = \frac{1}{N} \sum_{i=1}^N \Pr(\tau \mid X_i, \theta)$. 
\end{restatetheorem}
\begin{proof}
Existing work~\cite{kahn1953methods, mcbookOwen} has proven the unbiased variance-minimizing sampling proposal is $\Pr(\tau) \propto |r(\tau)| \Pr(\tau \mid \theta) $. 
We then have the following. 
\begin{align}
    & |r(\tau)| \Pr(\tau \mid \theta)  
     = \sum_{i=1}^N  \frac{\Pr(X_i \mid \tau)}{\Pr(X_i \mid \theta) } \Pr(\tau \mid \theta)  
     = \sum_{i=1}^N  \Pr(\tau \mid X_i , \theta) 
     = N q(\tau \mid X_{1:N} , \theta) .
\end{align}
Dividing this term by $N$ normalizes it to give a total probability of $1$ across all trajectories. 
Thus, the optimal off-policy sampling proposal is $q(\tau \mid X_{1:N}, \theta)$. 
\end{proof}

%\begin{lemma}
%When using the optimal off-policy sampling %distribution $\Pr(\tau \mid X_1, \ldots, X_N, \theta)$, the GReinSS loss function gradient becomes $\frac{d}{d\theta} \mathbb{E}_{\tau \sim }[] $
%\label{lem:opt}
%\end{lemma}

\begin{restatelemma}{obs:maxLike}
Let the given observations equal the ground-truth states, i.e., $X_i = S^*_i$ such that $\Pr(X_i \mid S) = 1$ if $S = X_i = S^*_i$ and $0$ otherwise.
Then, Problem~\ref{prob:dist} of solving $\text{argmax}_\theta \Pr(X_{1:N} \mid \theta)$ simplifies to $\text{argmax}_\theta \sum_{i=1}^N \log(\Pr(S^*_i \mid \theta))$.
\end{restatelemma}
\begin{proof}
\begin{align}
    \text{argmax}_\theta \Pr(X_{1:N} \mid \theta) &= \text{argmax}_\theta \log(\Pr(X_{1:N} \mid \theta)) \\ 
    &= \text{argmax}_\theta \log( \prod_{i=1}^N \Pr(X_i \mid \theta)) \\ 
    &= \text{argmax}_\theta \sum_{i=1}^N \log(\Pr(X_i \mid \theta))) \\ 
    &= \text{argmax}_\theta \sum_{i=1}^N \log( \Pr(S^*_i \mid \theta)))
\end{align}
\end{proof}

\begin{restatelemma}{lem:RL}
Let $\Pr(X_i \mid S) = \Pr(X_j \mid S)$ across all $S \in \mathcal{S}$ for all $i, j \in [N]$. 
Then, GReinSS simplifies to standard policy gradients with rewards $r'(\tau) = \sum_{i=1}^N \Pr(X_i \mid \tau)$ and standard reward normalization. 
\end{restatelemma}
\begin{proof}
GReinSS utilizes the reward function $r(\tau) = \sum_{i=1}^N \Pr(X_i \mid \tau) / \Pr(X_i \mid \theta)$. 
The naive policy learning reward function is $r'(\tau) = \sum_{i=1}^N \Pr(X_i \mid \tau)$.
Standard reward normalization modifies this to $r'_N(\tau) = r'(\tau) / \mathbb{E}_{\tau' \sim \Pr( \tau' \mid \theta) }[r'(\tau')]$. 
Note that for all $i, j \in [N]$ we have $\Pr(X_i \mid \theta) = \Pr(X_j \mid \theta)$ since $\Pr(X_i \mid S) = \Pr(X_j \mid S)$ for all $S\in \mathcal{S}$.
Also note that for all $i, j \in [N]$ and all trajectories $\tau$, we have $\Pr(X_i \mid \tau) = \Pr(X_i \mid S(\tau)) = \Pr(X_j \mid S(\tau)) = \Pr(X_j \mid \tau)$. 
We then have the following.
\begin{align}
    r'_N(\tau) &= \frac{r'(\tau)} {\mathbb{E}[r'(\tau)]} \\ 
    &= \frac{\sum_{i=1}^N \Pr(X_i \mid \tau) } { \mathbb{E}_{\tau' \sim \Pr( \tau' \mid \theta) }[ \sum_{i=1}^N \Pr(X_i \mid \tau)] } \\ 
    &= \frac{N \Pr(X_1 \mid \tau) } { \mathbb{E}_{\tau' \sim \Pr( \tau' \mid \theta) }[ N \Pr(X_1 \mid \tau)] } \\ 
    &= \frac{ \Pr(X_1 \mid \tau) } { \mathbb{E}_{\tau' \sim \Pr( \tau' \mid \theta) }[ \Pr(X_1 \mid \tau')] } \\ 
    &= \frac{ \Pr(X_1 \mid \tau) } {  \Pr(X_1 \mid \theta) }  \\ 
    &= \frac{1}{N} \sum_{i=1}^N \frac{ \Pr(X_i \mid \tau) } {  \Pr(X_i \mid \theta) } \\ 
    &=  \frac{1}{N} r(\tau). 
\end{align}
Thus, $r'_N(\tau)$ differs from $r(\tau)$ only by a constant, making them equivalent to optimize. 
\end{proof}

\begin{restatelemma}{lem:GFlow}
    Let, for each $i \in [N]$, $\Pr(X_i \mid \tau) = 1$ for exactly one trajectory $\tau$ and $0$ for all other trajectories.
    Then, the optimal GFlowNets distribution $\Pr(\tau \mid \theta)$ is also the optimal solution to Problem~\ref{prob:dist}. 
\end{restatelemma}
\begin{proof}

For each $i \in [N]$ let $\tau_i$ be the trajectory for which $\Pr(X_i \mid \tau) = 1$. 
Let $\mathbb{I}_{\tau = \tau_i}$ equal $1$ if $\tau = \tau_i$ and $0$ otherwise. 
The GFlowNets reward function is $r'(\tau) = \sum_{i=1}^N \Pr(X_i \mid \tau) = \sum_{i=1}^N \mathbb{I}_{\tau = \tau_i}$. 
The optimal solution for GFlowNets is having the probability $\Pr(\tau \mid \theta)$ be proportional to the reward $r'(\tau)$. 
Consequently, the optimal solution for GFlowNets is achieved by setting $\Pr(\tau_i \mid \theta) = \frac{1}{N} \sum_{i=1}^N \mathbb{I}_{\tau = \tau_i}$.
For GReinSS we have the following. 
\begin{align}
\text{argmax}_\theta \Pr(X_1, \ldots X_N \mid \theta) &= \text{argmax}_\theta \sum_{i=1}^N  \log(  \mathbb{E}_{\tau \sim \Pr(\tau \mid \theta)} [ \Pr(X_i \mid \tau)] ) \\ 
&= \text{argmax}_\theta \sum_{i=1}^N  \log(  \mathbb{E}_{\tau \sim \Pr(\tau \mid \theta)} [ \mathbb{I}_{\tau = \tau_i} ] )  \\
&= \text{argmax}_\theta \sum_{i=1}^N  \log( \Pr(\tau_i \mid \theta)).
\end{align}
This log-likelihood is maximized by setting $\Pr(\tau_i \mid \theta)$ to the empirical distribution in the data $\mathbb{E}_{i \in [N]} \mathbb{I}_{\tau = \tau_i} $.
Thus, the optimal solutions for GReinSS and GFlowNets in this special case are both $\Pr(\tau_i \mid \theta) = \frac{1}{N} \sum_{i=1}^N \mathbb{I}_{\tau = \tau_i}$.

\end{proof}

Mini-batching is compatible with GReinSS and still gives an unbiased estimator of the gradient of the observation data log-likelihood as shown below. 

\begin{corollary}
\label{thm:minibatch}

Let $B \subset [N]$ be a mini-batch sampled uniformly at random, and define the mini-batch reward
\begin{equation}
    r_B(\tau) =  \sum_{i \in B} \frac{\Pr(X_i \mid \tau)}{\Pr(X_i \mid \theta)}.
\end{equation}
Then the policy gradient
\begin{equation}
\frac{N}{|B|} \mathbb{E}_{\tau, B} [ r_B(\tau) \frac{d}{d\theta} \log \Pr(\tau \mid \theta)]    
\end{equation}
is an unbiased estimator of the full-data log-likelihood gradient
\begin{equation}
    \frac{d}{d\theta} \log \Pr(X_{1:N} \mid \theta).
\end{equation}

\end{corollary}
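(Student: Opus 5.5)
The plan is to reduce the statement to \cref{thm:opt} using linearity of expectation and the uniformity of $B$. Note first that $r_B(\tau)$ depends on $\theta$ only through the denominators $\Pr(X_i \mid \theta)$, which are treated as constants with respect to differentiation. So the mini-batch policy gradient is a sum of per-observation terms
\begin{equation*}
g_i := \mathbb{E}_\tau\Big[\frac{\Pr(X_i \mid \tau)}{\Pr(X_i \mid \theta)} \frac{d}{d\theta}\log \Pr(\tau \mid \theta)\Big].
\end{equation*}
The proof of \cref{thm:opt} works observation by observation: for each single $i$, the chain-rule step and the log-derivative trick give exactly
\begin{equation*}
g_i = \frac{d}{d\theta} \log \Pr(X_i \mid \theta).
\end{equation*}
I would state this per-$i$ identity explicitly. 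It is the $N=1$ case of \cref{thm:opt}, applied to the single observation $X_i$.

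Next, write $r_B(\tau) = \sum_{i=1}^N \mathbb{I}_{i \in B} \frac{\Pr(X_i \mid \tau)}{\Pr(X_i \mid \theta)}$. The sampling of $B$ is independent of $\tau \sim \Pr(\tau \mid \theta)$, so the joint expectation factors as
\begin{equation*}
\mathbb{E}_{\tau,B}[r_B(\tau)\tfrac{d}{d\theta}\log\Pr(\tau\mid\theta)] = \sum_{i=1}^N \Pr(i \in B)\, g_i.
\end{equation*}
Interpret "sampled uniformly at random" as $B$ being a uniformly random subset of fixed size $|B|$, with or without replacement. In either case the inclusion probability (or expected multiplicity) is $\Pr(i\in B) = |B|/N$ for every $i$.

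Multiplying by $N/|B|$ then gives
\begin{equation*}
\sum_{i=1}^N g_i = \sum_{i=1}^N \frac{d}{d\theta}\log\Pr(X_i\mid\theta) = \frac{d}{d\theta}\log\Pr(X_{1:N}\mid\theta),
\end{equation*}
using $\Pr(X_{1:N}\mid\theta)=\prod_i \Pr(X_i\mid\theta)$ as in \cref{thm:opt}.

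The only delicate step is justifying the exchange of the sum over $i$, the expectation over $B$, and the sum over trajectories. With finitely many $i$ and $B$ this is immediate, and over trajectories it needs the same regularity already implicitly assumed in \cref{thm:opt}, such as finite $\mathcal{T}$ or dominated convergence. I would also remark that the fixed batch size is what makes the $N/|B|$ scaling deterministic. If $|B|$ were random, one would instead use the inclusion probabilities directly.
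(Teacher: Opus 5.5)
Your proof is correct and is essentially the paper's argument. Both rest on $\mathbb{E}_B\bigl[\frac{N}{|B|}\sum_{i\in B} a_i\bigr]=\sum_{i=1}^N a_i$ for a uniformly random batch of fixed size, combined with \cref{thm:opt}. The only cosmetic difference is the order: the paper averages over $B$ inside $\mathbb{E}_\tau$ to recover the full reward $r(\tau)$ and applies \cref{thm:opt} once, whereas you apply its $N=1$ case to each observation and then sum with inclusion weights $|B|/N$.
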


\begin{proof}

\begin{align}
&\frac{N}{|B|} \mathbb{E}_{\tau, B} [ r_B(\tau)\, \frac{d}{d\theta} \log \Pr(\tau \mid \theta)] =  \mathbb{E}_{\tau, B} [ \frac{N}{|B|} \sum_{i \in B} \frac{\Pr(X_i \mid \tau)}{\Pr(X_i \mid \theta)}  \frac{d}{d\theta} \log \Pr(\tau \mid \theta)] \\ 
& = \mathbb{E}_{\tau} [ \mathbb{E}_B[ \frac{N}{|B|} \sum_{i \in B} \frac{\Pr(X_i \mid \tau)}{\Pr(X_i \mid \theta)} ] \frac{d}{d\theta} \log \Pr(\tau \mid \theta)]  = \mathbb{E}_{\tau} [\sum_{i=1}^N \frac{\Pr(X_i \mid \tau)}{\Pr(X_i \mid \theta)}  \frac{d}{d\theta} \log \Pr(\tau \mid \theta)] 
\end{align}
This is then equal to $\frac{d}{d\theta} \log \Pr(X_{1:N} \mid \theta)$ by \cref{thm:opt}.

\end{proof}

If each $X_i = S^*_i$ uniquely determines one trajectory $\tau$, then GReinSS simplifies to autoregressive generation as shown below.

\begin{lemma}
\label{lem:sup}
Autoregressive generation using the negative log-likelihood loss is equivalent to GReinSS when for all $i \in [N]$ the probability $\Pr(X_i \mid \tau)$ is non-zero for exactly one known trajectory $\tau^*_i$. 
\end{lemma}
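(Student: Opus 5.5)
The plan is to show that, under the hypothesis of \cref{lem:sup}, the GReinSS objective and its gradient estimator coincide with the negative log-likelihood objective and gradient of autoregressive (teacher-forced) training on the known trajectories $\tau^*_1,\ldots,\tau^*_N$. I would argue at the level of objectives, so that the same maximizers follow immediately, and also at the level of the policy gradient of \cref{thm:opt}, so that the training updates themselves are identical.

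First, write $c_i = \Pr(X_i \mid \tau^*_i) > 0$, so that $\Pr(X_i \mid \tau) = c_i \mathbb{I}_{\tau = \tau^*_i}$. Substituting into \eqref{eq:obsTau} gives
\begin{equation*}
\Pr(X_i \mid \theta) = \sum_{\tau} \Pr(X_i \mid \tau)\Pr(\tau \mid \theta) = c_i \Pr(\tau^*_i \mid \theta).
\end{equation*}
Hence
\begin{equation*}
\log \Pr(X_{1:N} \mid \theta) = \sum_{i=1}^N \log c_i + \sum_{i=1}^N \log \Pr(\tau^*_i \mid \theta).
\end{equation*}
The first sum does not depend on $\theta$. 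The second sum is exactly minus the autoregressive negative log-likelihood loss, since for a sequential policy $\Pr(\tau^*_i \mid \theta)$ factorizes as the product of per-step action probabilities along $\tau^*_i$. This mirrors the argument of \cref{obs:maxLike} and the computation in \cref{lem:GFlow}.

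Second, for the gradient, I would plug the same substitution into the GReinSS reward \eqref{eq:reward}:
\begin{equation*}
r(\tau) = \sum_{i=1}^N \frac{\mathbb{I}_{\tau = \tau^*_i}}{\Pr(\tau^*_i \mid \theta)}.
\end{equation*}
The policy gradient then becomes
\begin{equation*}
\mathbb{E}_\tau\Big[ r(\tau) \tfrac{d}{d\theta}\log\Pr(\tau \mid \theta) \Big] = \sum_\tau \Pr(\tau \mid \theta) \sum_i \frac{\mathbb{I}_{\tau=\tau^*_i}}{\Pr(\tau^*_i \mid \theta)} \tfrac{d}{d\theta}\log \Pr(\tau \mid \theta) = \sum_i \tfrac{d}{d\theta}\log\Pr(\tau^*_i \mid \theta).
\end{equation*}
This is exactly the gradient of the autoregressive log-likelihood, and it is consistent with \cref{thm:opt}.

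Third, I would remark on the sampling. With the optimal proposal of \cref{lem:sampling}, $q(\tau \mid X_{1:N},\theta) = \frac{1}{N}\sum_i \mathbb{I}_{\tau=\tau^*_i}$, so off-policy GReinSS samples precisely the known trajectories. After importance weighting by $\Pr(\tau\mid\theta)/q$, each sampled $\tau^*_i$ contributes $N\,\tfrac{d}{d\theta}\log\Pr(\tau^*_i\mid\theta)$ in expectation. This matches teacher forcing on a uniformly sampled training example up to the constant $N$, i.e.\ mini-batch autoregressive training as in \cref{thm:minibatch}.

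The main obstacle is making \emph{equivalent} precise. The objective-level identity is routine; the care lies in the stochastic-estimator level. There one must observe that the importance weights cancel the $1/\Pr(\tau^*_i\mid\theta)$ factor in the dynamic reward exactly, leaving a deterministic, zero-variance teacher-forcing gradient rather than a high-variance on-policy estimate. One must also note that \emph{known} $\tau^*_i$ is what allows direct sampling from $q$.
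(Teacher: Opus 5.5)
Your proposal is correct and matches the paper's proof in substance. The paper likewise shows that the optimal proposal of \cref{lem:sampling} collapses to $\frac{1}{N}\sum_{i=1}^N \mathbb{I}_{\tau=\tau^*_i}$ (uniformly pick $i$, then take $\tau^*_i$), and that $\frac{d}{d\theta}\log\Pr(X_{1:N}\mid\theta)=\sum_{i=1}^N \frac{d}{d\theta}\log\Pr(\tau^*_i\mid\theta)$. You go slightly further by making explicit that the combined weight $r(\tau)\Pr(\tau\mid\theta)/q(\tau\mid X_{1:N},\theta)$ equals the constant $N$ on the support of $q$. One small caveat: only that scalar weight is constant, so the per-sample estimator $N\tfrac{d}{d\theta}\log\Pr(\tau^*_i\mid\theta)$ still carries the usual per-example variance of stochastic gradient training and is not literally ``zero-variance.''
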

\begin{proof}
For $i \in [N]$, let $\tau^*_i$ be the one trajectory for which $\Pr(X_i, \tau^*_i)$ is non-zero. 
Define $\delta(\tau, \tau') = 1$ if $\tau = \tau'$ and $0$ otherwise. 
The off-policy sampling has the following probability. 
\begin{align}
    \Pr(\tau \mid X_1, \ldots, X_N, \theta)  
     &= \frac{1}{N} \sum_{i=1}^N \Pr(\tau \mid X_i, \theta)  \\
     &= \frac{1}{N} \sum_{i=1}^N \frac{\Pr(\tau \mid \theta) \Pr(  X_i \mid \tau)} { \Pr(X_i \mid \theta) } \\ 
    & = \frac{1}{N} \sum_{i=1}^N \frac{\Pr(\tau \mid \theta) \Pr(  X_i \mid \tau)} { \sum_{\tau' \in \mathcal{T}} \Pr(X_i \mid \tau' ) \Pr( \tau' \mid \theta) }  \\
     &= \frac{1}{N} \sum_{i=1}^N \frac{\Pr(\tau^*_i \mid \theta) \Pr(  X_i \mid \tau^*_i) \delta(\tau, \tau^*_i) } { \Pr(X_i \mid \tau^*_i ) \Pr( \tau^*_i \mid \theta) }   \\
     &= \frac{1}{N} \sum_{i=1}^N \delta(\tau, \tau^*_i) 
\end{align}
Thus, optimal importance sampling simply corresponds to uniformly randomly sampling a data point $i \in [N]$, and then selecting the trajectory $\tau^*_i$. 
We note that each $\tau^*_i$ simply corresponds to a sequence of actions. 
Therefore, our sampling procedure corresponds to uniformly randomly selecting sequences from the list $\tau^*_1, \ldots, \tau^*_N$. 
Additionally, the loss function gradient is as follows. 
\begin{align}
     \frac{d}{d\theta} \log(\Pr(X_1, \ldots X_N \mid \theta) ) 
    &= \frac{d}{d\theta} \log(\prod_{i=1}^N \sum_{\tau \in \mathcal{T} } \Pr(X_i \mid \tau) \Pr(\tau \mid \theta) )\\ 
    &= \frac{d}{d\theta} \sum_{i=1}^N \log( \Pr(X_i \mid \tau^*_i) \Pr(\tau^*_i \mid \theta) ) \\
    &= \sum_{i=1}^N  \left[ \frac{d}{d\theta} \log( \Pr(X_i \mid \tau^*_i))+  \frac{d}{d\theta} \log( \Pr(\tau^*_i \mid \theta) ) \right] \\ 
    &= \sum_{i=1}^N  \frac{d}{d\theta} \log( \Pr(\tau^*_i \mid \theta) )
\end{align}
This is exactly the negative log-likelihood loss. 
Uniformly randomly selecting sequences (of actions) and then evaluating the negative log likelihood loss on each sequence is simply autoregressive generation. 
%Breaking down each trajectory probability $\Pr(\tau^*_i \mid \theta) $ into the product of probabilities of each action in the trajectory then gives the cross entropy loss on the list of sequences of actions $\tau^*_1, \ldots, \tau^*_N$. 
%Uniformly randomly selecting sequences and then evaluating the cross-entropy loss on each sequence is exactly supervised learning on sequences. 
\end{proof}

\begin{theorem}
Since $X_1, \ldots, X_N$ only depend on $\theta$ through $S$, the M-step of expectation maximization simplifies to $\theta^{(t+1)} = \text{argmax}_{\theta} \mathbb{E}_{S \sim \Pr(S \mid X_1, \ldots, X_N, \theta^{(t)}  )} [ \log(  \Pr(S \mid \theta )) ]$.
\label{lem:GEMsimple}
\end{theorem}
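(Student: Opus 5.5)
Start from the standard EM M-step,
\[
\theta^{(t+1)} = \text{argmax}_{\theta} \sum_{i=1}^N \mathbb{E}_{S_i \sim \Pr(S \mid X_i, \theta^{(t)})}\left[\log \Pr(X_i, S_i \mid \theta)\right],
\]
as stated in the related-works section. Then factor the complete-data likelihood using the modeling assumption that $X$ depends on $\theta$ only through $S$:
\[
\Pr(X_i, S \mid \theta) = \Pr(X_i \mid S)\Pr(S \mid \theta).
\]
Taking logarithms, each summand splits into $\log \Pr(X_i \mid S)$ plus $\log \Pr(S \mid \theta)$. The first term does not involve $\theta$, so its expectation under the fixed posterior at $\theta^{(t)}$ is a constant and drops out of the argmax. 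This leaves
\[
\text{argmax}_\theta \sum_{i=1}^N \mathbb{E}_{S \sim \Pr(S \mid X_i, \theta^{(t)})}\left[\log \Pr(S \mid \theta)\right].
\]

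Next, merge the $N$ posterior expectations into one expectation. Since $\log \Pr(S \mid \theta)$ is the same function of $S$ for every $i$, linearity gives
\[
\sum_i \mathbb{E}_{\Pr(S \mid X_i, \theta^{(t)})}[f(S)] = N\, \mathbb{E}_{S \sim q}[f(S)], \qquad q(S) = \frac{1}{N}\sum_{i=1}^N \Pr(S \mid X_i, \theta^{(t)}).
\]
The factor $N$ does not affect the argmax. The mixture $q$ is exactly the distribution the statement denotes $\Pr(S \mid X_1,\ldots,X_N,\theta^{(t)})$, consistent with $q(\tau \mid X_{1:N},\theta)$ in \cref{lem:sampling}. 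Bayes' rule, $\Pr(S \mid X_i,\theta) = \Pr(X_i \mid S)\Pr(S \mid \theta)/\Pr(X_i \mid \theta)$, connects $q$ to the form written in \cref{sec:baselines}.

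The main obstacle is notational. The symbol $\Pr(S \mid X_1,\ldots,X_N,\theta^{(t)})$ must be read as the averaged per-observation posterior $q$, not as a joint posterior of a single $S$ given all observations. I would state this interpretation explicitly at the start of the proof. I would also note that the normalizers $\Pr(X_i \mid \theta^{(t)})$ are evaluated at the fixed $\theta^{(t)}$, so they are constants with respect to the optimized $\theta$ and do not change the argmax.
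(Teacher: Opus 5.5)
Your proposal is correct and follows the paper's proof step for step: factor $\Pr(X_i, S \mid \theta) = \Pr(X_i \mid S)\Pr(S \mid \theta)$, drop the $\theta$-independent expectation of $\log \Pr(X_i \mid S)$, merge the $N$ posterior expectations into $N$ times a single expectation, and discard the factor $N$. Your explicit reading of $\Pr(S \mid X_1,\ldots,X_N,\theta^{(t)})$ as the average $\frac{1}{N}\sum_{i}\Pr(S \mid X_i,\theta^{(t)})$ is exactly what the paper's merging step uses implicitly. One caution: the $q$ written in \cref{sec:baselines} omits the normalizers $\Pr(X_i \mid \theta)$, so it matches your mixture only once each term is divided by its normalizer.
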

\begin{proof}
The standard M-step of expectation maximization is as follows.
\begin{equation}
\theta^{(t+1)} = \text{argmax}_{\theta} \sum_{i=1}^N \mathbb{E}_{S_i \sim \Pr(S \mid X_i, \theta^{(t)}  )} [  \log(  \Pr(X_i, S_i \mid \theta )) ].    
\end{equation}
Then, we have the following derivation. 
\begin{align}
    \theta^{(t+1)} &= \text{argmax}_{\theta} \sum_{i=1}^N \mathbb{E}_{\hat{S}_i \sim \Pr(S \mid X_i, \theta^{(t)}  )} [  \log(  \Pr(X_i, \hat{S}_i \mid \theta )) ].  \\ 
    &= \text{argmax}_{\theta} \sum_{i=1}^N \mathbb{E}_{\hat{S}_i \sim \Pr(S \mid X_i, \theta^{(t)}  )} [  \log(  \Pr(X_i \mid \hat{S}_i) \Pr(\hat{S}_i \mid \theta )) ].  \\ 
    &= \text{argmax}_{\theta} \sum_{i=1}^N \mathbb{E}_{\hat{S}_i \sim \Pr(S \mid X_i, \theta^{(t)}  )} [ \log( \Pr(\hat{S}_i \mid \theta )) ] + \mathbb{E}_{S_i \sim \Pr(S_i \mid X_i, \theta^{(t)}  )} [ \log(  \Pr(X_i \mid S_i)) ]  \\
    &= \text{argmax}_{\theta} \sum_{i=1}^N \mathbb{E}_{S_i \sim \Pr(S \mid X_i, \theta^{(t)}  )} [ \log( \Pr(S_i \mid \theta )) ] \\
    &= \text{argmax}_{\theta} N \mathbb{E}_{S \sim \Pr(S \mid X_1, \ldots, X_N, \theta^{(t)}  )} [ \log( \Pr(S \mid \theta )) ]  \\ 
    &= \text{argmax}_{\theta} \mathbb{E}_{S \sim \Pr(S \mid X_1, \ldots, X_N, \theta^{(t)}  )} [ \log( \Pr(S \mid \theta )) ]. 
\end{align}

\end{proof}

%\subsection{Observation probability estimation}
%\label{sec:obsProb}

%For calculating the reward $r(s)$, one must estimate the probability of each observation %$\Pr( X_i \mid \theta)$. 
%This can be accomplished via sampling as below. 

%\begin{equation}
%    Pr( X_i \mid \theta) = \mathbb{E}_\tau [ Pr(X_i \mid \tau) ]
%\end{equation}

%In many cases, one can simply sample a batch of graphs and estimate $\Pr( X_i \mid \theta)$ for all data points $i$ using this sample. 
%However, in some cases other strategies may be effective. 
%For instance, one can keep track of the trajectory $\tau$ that maximizes $\Pr(X_i \mid g(\tau)) Pr(\tau \mid \theta)$ since this provides a lower bound as below

%\begin{equation}
%    Pr( X_i \mid \theta) = \sum_{\tau \in A_{\tau}} Pr(X_i \mid g(\tau) ) Pr(\tau \mid \theta) > \text{max}_{\tau} Pr(X_i \mid g(\tau)) Pr(\tau \mid \theta)
%\end{equation}

%where $A_\tau$ is the set of all trajectories. 
%One can also keep track of an estimate of $\Pr( X_i \mid \theta)$ across several batches such as a rolling average, however, this gives the risk of using out of data estimates on new batches. 

%It is also worth noting that off-policy learning can be valuable for estimating $\Pr(X_i \mid \theta)$ and that this simply needs the below importance sampling correction. 

%\begin{equation}
%    Pr( X_i \mid \theta) = \mathbb{E}_{\tau \sim \pi'} [ Pr(X_i \mid g(\tau))  \frac{ Pr(\tau \mid \theta) }{ Pr(\tau \mid \pi') } ]. 
%\end{equation}

\section{Methodological and Implementation Details}

\subsection{Multiple Pools of Observations}
\label{app:pool}

\paragraph{Simple pooled observations:}

If multiple pieces of data are generated from each state $S$, one can group together these \emph{sub-observations} into one full observation $X_i$ for each state. 
For instance, if each $S^*_i$ is a transportation network, each $X_i$ may consist of a list of observed transportation paths (sub-observations) through that network.
As the number of sub-observations per observation $X_i$ increases, each observation $X_i$ provides more information about the latent state $S^*_i$, and the task of estimating the latent state $S^*_i$ becomes easier (illustrated in \cref{sec:sim}), and the probability distribution $\Pr(X_i \mid S)$ may approach zero for many incorrect states $S \neq S^*_i$.
For the \textsc{Process Graph Inference} problem in \cref{sec:sim}, each state is a directed graph, and observations consist of lists of start and end points of random walks through the graph. 
The number of sub-observations is varied, showing how this impacts the accuracy of each method.  
In simulations, modifying $n_p$ allows us to test how different methods perform when the amount of information about each ground-truth state $S^*_i$ in each observation $X_i$ varies.

\paragraph{Multiple environments:}
In some cases, one may have several environments $v$, each with their own distribution of latent states $\Pr^*_v(S)$, list of ground-truth latent states $S^*_{v,1}, \ldots S^*_{v, N^v}$, distribution of observations $\Pr_v(X \mid S)$, and list of observations $X^v_1, \ldots X^v_{N^v}$.
This may appear to be a generalization, but in fact, it is a special case of the original Problems~\ref{prob:dist} and \ref{prob:pred}.
We simply append the environment number $v$ to each observation and state. 
Specifically, we define $\Pr( (X, v) \mid (S, v') ) = \Pr_v( X \mid S )$ for $v = v'$, and $0$ otherwise. 
Similarly, define $\Pr^*((S, v)) = \Pr^*_v(S)$, and define $\Pr((S, v) \mid \theta)$ as the probability of the latent state $S$ in environment $v$ given model parameters $\theta$. 
We then solve Problems~\ref{prob:dist} and \ref{prob:pred} on the full set of observations $(X^1_1, 1), (X^1_2, 1), \ldots (X^M_N, n_{\text{env}})$ where $n_{\text{env}}$ is the number of environments.
This technique is used for modeling RNA splicing in \cref{sec:RNA}, with each sample treated as an environment.

\subsection{Intuitive Example Demonstrating Adaptive Rewards}
\label{app:toy}

\begin{figure}[t]
\vskip 0.2in
\centering
\centerline{\includegraphics[width=0.8\textwidth]{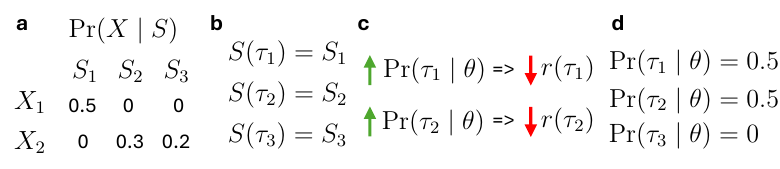}}
\caption{\textbf{A simple intuitive example applying GReinSS.} 
\textbf{a}~The problem setup is defined by the set of states $S_1, S_2, S_3$, the set of observations $X_1, X_2$, and the probability function $\Pr(X \mid S)$ defined on these states and observations. 
\textbf{b}~Policy learning is applied by having the states $S_1, S_2, S_3$ generated by trajectories $\tau_1, \tau_2, \tau_3$. 
%For simplicity, each state is uniquely generated by one trajectory in this example problem.
\textbf{c}~Increasing the probability $\Pr(\tau \mid \theta)$ of either trajectory $\tau_1$ or $\tau_2$, results in a decreased dynamic reward $r(\tau)$ for that trajectory. 
%For instance, increasing $\Pr(\tau_1 \mid \theta)$ increases $\Pr(X_1 \mid \theta)$ which decreases $r(\tau_1)$. 
\textbf{d}~The dynamic rewards result in an optimal solution where the probabilities $\Pr(\tau_1 \mid \theta)$ and $\Pr(\tau_2 \mid \theta)$ are balanced, maximizing the probability $\Pr(X_{1:N} \mid \theta) = \Pr(X_1 \mid \theta) \Pr(X_2 \mid \theta)$. 
However, $\Pr(\tau_3 \mid \theta) = 0$ since the trajectory $\tau_3$ is not needed to maximize $\Pr(X_{1:N} \mid \theta)$. 
}
\label{fig:toy}
%\end{center}
\vskip -0.2in
\end{figure}

%\minline{Include a Figure.}
Imagine a simple toy problem with three possible states $S_1, S_2, S_3$ and two observations $X_1, X_2$, with $\Pr(X_1 \mid S_1) = 0.5$, $\Pr(X_2 \mid S_2) = 0.3$, $\Pr(X_2 \mid S_3) = 0.2$, and $\Pr(X_i \mid S_j) = 0$ for all other $i \in [2], j \in [3]$ (\cref{fig:toy}a). 
For simplicity, imagine each state $S_i$ has its own unique trajectory $\tau_i$  (\cref{fig:toy}b). 
The fixed reward function without GReinSS's rescaling is $r'(\tau) = \sum_{i =1}^N \Pr(X_i \mid \tau)$. 
Thus, $r'(\tau_1) = 0.5$, $r'(\tau_2) = 0.3$, $r'(\tau_3) = 0.2$.
Since $\tau_1$ achieves the highest reward, standard policy gradients would solve for the optimal solution where $\Pr(\tau_1 \mid \theta) = 1$, $\Pr(\tau_2 \mid \theta) = \Pr(\tau_3 \mid \theta) = 0$.
Thus, $\Pr(X_2 \mid \theta) = 0$, and $\Pr(X_1, X_2 \mid \theta) = 0$.
%GFlowNets would assign probabilities proportional to the rewards such that $\Pr(\tau_1 \mid \theta) = 0.5$, $\Pr(\tau_2 \mid \theta) = 0.3$, $\Pr(\tau_3 \mid \theta) = 0.2$.
%This yeilds $\Pr(X_1 \mid \theta) = 0.25$, $\Pr(X_1 \mid \theta) = 0.3^2 + 0.2^2 = 0.13$, and thus  $\Pr(X_1, X_2 \mid \theta) = 0.25 \cdot 0.13 = 0.325$. 
%This is an effective solution, but not optimal. 

For GReinSS we have the following rewards.
\begin{align}
    r(\tau_1) = \frac{\Pr(X_1 \mid \tau_1)}{ \Pr(X_1 \mid \tau_1) \Pr(\tau_1 \mid \theta) } = \frac{1} { \Pr(\tau_1 \mid \theta) }
\end{align}
\begin{align}
    r(\tau_2) = \frac{\Pr(X_2 \mid \tau_2)}{ \Pr(X_2 \mid \tau_2) \Pr(\tau_2 \mid \theta) + \Pr(X_2 \mid \tau_3) \Pr(\tau_3 \mid \theta) } 
\end{align}
\begin{align}
    r(\tau_3) = \frac{\Pr(X_2 \mid \tau_3)}{ \Pr(X_2 \mid \tau_2) \Pr(\tau_2 \mid \theta) + \Pr(X_2 \mid \tau_3) \Pr(\tau_3 \mid \theta) } 
\end{align}
We have $r(\tau_2) > r(\tau_3)$ for any $\theta$ which results in $\Pr(\tau_3 \mid \theta) = 0$. 
Thus, we have the following simplification. 
\begin{align}
    r(\tau_2) = \frac{\Pr(X_2 \mid \tau_2)}{ \Pr(X_2 \mid \tau_2) \Pr(\tau_2 \mid \theta) } = \frac{1} { \Pr(\tau_2 \mid \theta) }
\end{align}
When $\Pr(\tau_1 \mid \theta) > \Pr(\tau_2 \mid \theta)$ then $r(\tau_2) > r(\tau_1)$ and when $\Pr(\tau_1 \mid \theta) < \Pr(\tau_2 \mid \theta)$ then $r(\tau_2) < r(\tau_1)$ (\cref{fig:toy}c).
The equilibrium solution is $r(\tau_1) = r(\tau_2)$ when $\Pr(\tau_1 \mid \theta) = \Pr(\tau_2 \mid \theta)$. 
Thus, GReinSS solves for $\Pr(\tau_1 \mid \theta) = \Pr(\tau_2 \mid \theta) = 0.5$, and $\Pr(\tau_3 \mid \theta) = 0$  (\cref{fig:toy}d). 
Thus, $\Pr(X_1 \mid \theta) = 0.5^2 = 0.25$, and $\Pr(X_2 \mid \theta) = 0.5 \cdot 0.3 = 0.15$. 
Consequently, $\Pr(X_1, X_2 \mid \theta) = 0.25 \cdot 0.15 = 0.0375$. %, the maximum obtainable value. 

\subsection{Methodological comparison of GReinSS with expectation maximization}
\label{app:GReinSSGEM}

GReinSS and generalized expectation maximization (GEM) both have the goal of maximizing the probability $\Pr(X_1, \ldots, X_N \mid \theta)$.
GEM consists of alternating between an E-step of estimating the latent states and an M-step of optimizing $\theta$ to maximize an expectation value given the estimated latent states. 
Specifically, for Problem~\ref{prob:dist}, GEM cannot be solved exactly, so the E-step is approximated by predicting latent states $\hat{S}_1, \ldots, \hat{S}_N$ with $\hat{S}_i = \text{argmax}_S \Pr(S \mid \theta) \Pr(X_i \mid S)$. 
Then, since $X_i$ only depends on $\theta$ through $S$, the M-step simplifies to optimizing $\theta$ to maximize the $\sum_{i=1}^N \log( \Pr( \hat{S}_i \mid \theta))$ as shown in \cref{lem:GEMsimple}. 

With this approximation, the first step of GReinSS and GEM both consist of sampling $S$ from $\Pr(S \mid \theta)$.
For GReinSS, the next step is using $\Pr(X_i \mid S)$ to calculate rewards for the sampled states and modifying $\theta$ using policy gradients based on these rewards. 
For GEM, the next step is using $\Pr(X_i \mid S)$ to estimate $\hat{S}_1, \ldots \hat{S}_N$ using the sampled states and then modifying $\theta$ to maximize the probability $\sum_{i=1}^N \log( \Pr( \hat{S}_i \mid \theta))$. 
Both GReinSS and GEM repeatedly alternate between these steps until convergence of $\sum_{i=1}^N \log( \Pr( \hat{S}_i \mid \theta))$. 
Since the M-step of GEM consists of maximum likelihood generative modeling, it can utilize any machine learning method designed for this task, including autoregressive models, variational autoencoders, and discrete diffusion.

\subsection{Prior Applications of GReinSS}
\label{app:CNRein}

Two previous papers have utilized special cases of the GReinSS technique without formulating the full generalized GReinSS procedure~\cite{ivanovic2023modeling, ivanovic2025cnrein}.
One of these papers introduces CLoMu, a method for modeling and predicting cancer clonal evolution~\cite{ivanovic2023modeling}.
CloMu specifically defines the latent states as unobserved phylogenetic (evolutionary) trees representing the clonal evolution of SNV mutations in the tumor of a specific patient. 
These phylogenetic trees are generated by starting with an unmutated cell, and iteratively adding SNV mutations to populations of cells in order to form a phylogeny tree of populations of cells with different mutations. 
Thus, the actions generating the trajectory correspond to adding mutations to populations of cells (clones) in order to form new populations of cells (clones) with new mutations. 
The combinatorial structure of phylogeny trees makes their generation via policy learning very natural. 
Phylogeny trees are never directly observed. 
Instead, DNA sequencing enables noisy measurements of mutations on existing populations of cells, from which sets of possible phylogeny trees can be derived. 
The observations used consist of possible sets of phylogeny trees for each patient, and are generated from the DNA sequencing data. 

The second prior application of GReinSS is CNRein~\cite{ivanovic2025cnrein}, a method for inferring CNV mutations on individual cells from single-cell DNA sequencing of tumors. 
A latent state consists of the set of CNV mutations present in an individual cell in the form of a copy number profile that indicates the total number of copies of each region of the genome, as well as the haplotype that the copies correspond to. 
Mathematically, each latent state is represented by a sequence of pairs of integers in which each integer in the pair represents one of the two haplotypes. 
These latent states are generated by iteratively adding CNV mutations to a cell, starting with a normal cell. 
Thus, actions correspond to selecting a CNV mutation to add, represented in terms of the genomic region covered by the CNV, the haplotype of the CNV, and the copy number change of the CNV. 
The observation data consist of a processed form of the DNA sequencing data of each cell. 
One component of this sequencing data is the read depth, which is defined as the number of DNA sequencing reads that are determined to originate from each genomic region.
Another component is the B-allele frequency, which is defined as the proportion of DNA sequencing reads that are determined to originate from each haplotype for some given genomic region. 
Mathematically, each of these is represented by a list of real values with one real value for each genomic position. 
Although these measurements are biologically complex, the relationship between these observations and the latent states is modeled with a simple Gaussian distribution.
For instance, the read depth of a genomic position (a component of $X_i$) is modeled by a Gaussian distribution with a mean equal to the integer total copy number of that genomic position (a component of $S$). 
This formulation is very similar to our latent set reconstruction problem, with the modification of allowing for integer-valued vectors rather than binary vector representations.

%In this case, $\Pr(S \mid \theta)$ is a statistical model of cancer evolotution across patients. 
%For the method CNRein~\cite{ivanovic2025cnrein}, states correspond to the set of CNV mutations in individual cells in a cancer patient and observations are noisy DNA sequencing measurements of each cell. 
%Then, $\Pr(S \mid \theta)$ models the generation of distinct sets of mutations for different cells in the cancer patient. 
%Although CloMu and CNRein are applications of the GReinSS technique, neither paper generalizes this technique beyond their specific biological application. 

%\minline{If short on space, just a single sentence, essentially your first sentence, would suffice. Details of mapping to GReinSS framework can be in supplement.}

%In this case, $\Pr(S \mid \theta)$ is a statistical model of cancer evolotution across patients. 
%For the method CNRein~\cite{ivanovic2025cnrein}, states correspond to the set of CNV mutations in individual cells in a cancer patient and observations are noisy DNA sequencing measurements of each cell. 
%Then, $\Pr(S \mid \theta)$ models the generation of distinct sets of mutations for different cells in the cancer patient. 
%Although CloMu and CNRein are applications of the GReinSS technique, neither paper generalizes this technique beyond their specific biological application. 

%\minline{If short on space, just a single sentence, essentially your first sentence, would suffice. Details of mapping to GReinSS framework can be in supplement.}

\subsection{Latent State Inference}
\label{app:inference}

Given some model parameters $\theta$, and the ability to sample states from the distribution $\Pr(S \mid \theta)$, we describe the procedure for solving Problem~\ref{prob:pred} and predicting states $\hat{S_1}, \ldots, \hat{S_N}$ with the goal of matching the ground-truth latent states $S^*_1, \ldots, S^*_N$.
Let $\mathbb{I}_{S = S'}$ be defined as $1$ if $S = S'$ and $0$ otherwise. 
Note that $\Pr(X_i \mid S) \Pr(S \mid \theta) = \mathbb{E}_{S \sim \Pr(S \mid \theta) }[\mathbb{I}_{S = S'} \Pr(X_i \mid S')  ]$ can be estimated via sampling from $\Pr(X_i \mid S)$ for all methods regardless of whether $\Pr(X_i \mid S)$ has a closed-form expression.
Thus, for all methods we sample $s$ from $\Pr(S\mid\theta)$ and predict the state $\hat{S}_i = \text{argmax}_{S'} \mathbb{E}_{S \sim \Pr(S \mid \theta) }[\mathbb{I}_{S = S'} \Pr(X_i \mid S')  ] $.
For methods with off-policy sampling, we apply the importance sampling correction to this expectation value. 
During the training of GEM-based methods, we sample a batch size of $1000$ latent states to calculate the E-step estimates of $\hat{S}_1, \ldots, \hat{S}_N$ prior to the M-step update of model parameters $\theta$. 
For all methods, during the final prediction of latent states, we sample $100{,}000$ latent states from $\Pr(S \mid \theta)$ to calculate $\hat{S}_1, \ldots, \hat{S}_N$.

\subsection{Importance Sampling}
\label{app:import}

After modifying the sampling procedure, one simply has to use importance sampling to account for this in policy gradient.
Let $\pi'$ be the modified sampling procedure that utilizes the observations $X_1, \ldots, X_N$ to generate trajectories $\tau$ with high probabilities $\Pr(X_i \mid \tau)$. 
Then, applying importance sampling gives the below equation
\begin{align}
& \frac{d}{d\theta}\log(\Pr(X_1, \ldots X_N \mid \theta) )  
 = \mathbb{E}_{\tau \sim \pi'}[ r(\tau) \frac{\Pr(\tau \mid \theta)}{\Pr(\tau \mid \pi')} \frac{d}{d\theta} \log(\Pr(\tau \mid \theta)) ].
\end{align}
Thus, off-policy sampling of $\tau$ gives the correct policy gradient if one includes the importance sampling correction $\Pr(\tau \mid \theta) / \Pr(\tau \mid \pi')$.

\subsection{Model Architectures}
\label{app:arch}

All models utilize relatively similar neural network architectures, so that the differences in results between models result from their different training procedure rather than their model architecture. 
All policy-learning-based methods, including GReinSS, naive policy gradients, and trajectory balance GFlowNets, use an identical model architecture.
This model consists of a two-layer fully connected neural network with $50$ hidden neurons and the PyTorch leakyRelu non-linearity. 
The autoregressive model uses a similar architecture consisting of a two-layer neural network with autoregressive masking and consequently uses a number of hidden neurons equal to the number of tokens in the input. 
The variational autoencoder uses two two-layer neural networks with $50$ hidden neurons for both the encoder and decoder. 
The size of the latent representation inside the auto-encoder is also $50$. 
The discrete diffusion model also uses a two-layer neural network with $50$ hidden neurons. 
For the discrete diffusion model, $20$ timesteps are used. 
Timestep values are concatenated with the input after first applying a cosine-based transformation.
Specifically, the timestep $t$ is embedded into a $20$ dimensional vector where the $i$th element is defined as $\cos( t \cdot i \cdot \pi / 20 )$. 
The local search algorithm does not use a trained model.
Latent states are instead represented as a binary vector, and for each $X_i$ each iteration changes one element in the binary vector in order to maximize $\Pr(X_i \mid s)$.

\subsection{Additional Baseline Details}
\label{app:baselineExtra}

The autoregressive model is trained using the standard negative log-likelihood loss.
The variational autoencoder is trained using the standard evidence lower bound (ELBO) loss.
The discrete diffusion model is trained using the standard denoising diffusion loss. 
Trajectory balance GFlowNets use the standard trajectory balance loss.
Naive policy gradients uses standard policy gradients with no augmentation of the reward function. 
GReinSS and GEM-based approaches are trained until $\Pr(X_1, \ldots, X_N \mid \theta)$ converges.
Naive policy gradients are trained until the achieved reward converges, and GFlowNets are trained until the loss function converges. 
The local search algorithm iterates through the observations $X_1, \ldots, X_N$ and for each observation $X_i$ it searches for the state $S$ that maximizes $\Pr(X_i \mid S)$.
For each $X_i$, local search starts with a binary vector representation of the latent state $S$ and performs a sequence of modifications to this vector until reaching a locally optimal state $S$. 
Specifically, each modification selects the single element modification to $S$ that maximizes $\Pr(X_i \mid S)$.
This continues until convergence of $\Pr(X_i \mid S)$ such that no single element modification to $S$ can improve $\Pr(X_i \mid S)$. 
For the set reconstruction simulations, local search is equivalent to including an element in the state $S$ if and only if the value for that element in the observation is greater than $0.5$. 
Consequently, local search is guaranteed to find the globally optimal $S$ for maximizing $\Pr(X_i \mid S)$ for the set reconstruction simulations.

\section{Additional Experimental Details}

\subsection{Process Graph Inference Simulation Details}
\label{app:sim}

In each graph-based simulation, latent states correspond to graphs that are generated by applying weight thresholding on some base graph.
Each simulation instance has a randomly generated directed base graph with each directed edge being included with probability $1/2$.
Weights for each directed edge in the base graph are uniformly randomly generated from $U(\frac{1}{4}, 1)$ (with the lower limit set above $0$ to ensure all edges in the base graph actually occur in some latent graph). 
Each of the $1000$ graphs in a simulation is generated by uniformly randomly generating a threshold from $U(0, 1)$ and then performing weight thresholding \cite{yan2018weight} on the base graph with that threshold. 

Observations consist of the start and end points of random walks in the graph. 
Specifically, the walk begins by uniformly randomly selecting the start node.
Then at each step we randomly choose between each edge directed out of that node as well as the ``stop'' action of ending on the current node. 
The stopping action and each outgoing edge are chosen with equal probability.
Note that this implies if the current node has no outgoing edges then it will always be stopped on if reached. 
Each $X_i$ consists of the list of start and end points of all of the random walks generated. 
The number of random walks generated is set to $10$, $100$, and $1000$ in the three simulations. 

\subsection{$F_1$ Score Calculation}
\label{app:Fscore}

The $F_1$ score is defined as follows
\begin{equation}
    F_1 = 2 \frac{\text{precision} \cdot \text{recall}}{\text{precision} + \text{recall}} = \frac{2 \cdot  \text{true positives} }{ (2 \cdot  \text{true positives}) + \text{false positives} + \text{false negatives}}. 
\end{equation}

Let $\hat{S}_i$ be a predicted state and $S^*_i$ be the true latent state, where latent states represent either sets or graphs as in our simulations (\cref{sec:sim}).
Then, true positives consist of elements or edges in both $\hat{S}_i$ and $S^*_i$. 
False positives consist of elements or edges in $\hat{S}_i$ but not $S^*_i$. 
False negatives consist of elements or edges in $S^*_i$ but not $\hat{S}_i$.
From these three quantities, an $F_1$ score is calculated for any predicted latent state $\hat{S}_i$ given the true latent state $S^*_i$.
The distribution of $F_1$ scores for the $N$ predictions $\hat{S}_{1:N}$ is either summarized by median values as in \cref{fig:simGraph,fig:simSet}, or visualized fully as in \cref{fig:graphWalks,fig:setSigma,fig:setLength}.

\subsection{Set Reconstruction Simulation Details}
\label{app:setSim}

%The sets themselves are simulated by first generating a dictionary of reusable random subsets of elements, and then defining latent states as the union of randomly selected sets from this dictionary with details provided in \cref{app:setSim}.
%The random sets in the dictionary have an average size of , and an average of  random sets are included in each datapoint, with details provided in \cref{app:setSim}.

%The states themselves are simulated using a process inspired by dictionary learning~\cite{aharon2006k}. 
%We first generate a dictionary of random subsets of elements that serve as reusable modules from which to build our states~\cite{mairal2009online}. 
%Each latent state is then defined as the union of a random selection of these sets, a generative approach similar to latent feature modeling~\cite{griffiths2011indian} and binary matrix factorization~\cite{meeds2006modeling}.
%Details of the simulation procedure are provided in \cref{app:setSim}.

The latent states correspond to sets of elements represented by binary vectors of size $v_{\text{size}}$. 
The default size of the vectors is $100$, but additional modified simulations use vectors of size $10$ and $1000$. 
The sets are generated by taking the union of subsets of elements in a dictionary of reusable modules~\cite{mairal2009online}. 
The number of subsets included in each state and the number of elements in each subset are both set to scale with $\sqrt{v_{\text{size}}}$ so that the proportion of elements included in each state remains roughly constant. 
Specifically, the dictionary contains $\sqrt{v_{\text{size}}}$ (rounded down) subsets/modules. 
Each subset is generated by randomly including each element with probability $2 / \sqrt{v_{\text{size}}}$. 
Each state is generated by randomly including each subset in the dictionary with probability $0.1$. 
As an example with $v_{\text{size}}=10$, if the dictionary contains the two subsets $\{2, 3, 5, 7\}$ and $\{1, 2, 3, 4\}$, and the state includes both of these subsets, then the state would consist of $\{1, 2, 3, 4, 5, 7\}$ which is equivalent to the binary vector $[1, 1, 1, 1, 1, 0, 1, 0, 0, 0]$. 
The observation would then consist of adding Gaussian noise with mean $0$ and standard deviation $\sigma$ to the vector $[1, 1, 1, 1, 1, 0, 1, 0, 0, 0]$.

\subsection{Off-policy Sampling Procedure for Simulated Set Reconstruction}
\label{app:simOffPolicy}

For generating each trajectory $\tau$, the off-policy sampling procedure first, with $50\%$ probability, either samples a trajectory on-policy or selects a random observation $X_i$ from the set of observations $X_1, \ldots, X_N$. 
If an observation $X_i$ is selected, we use the following procedure. 
Let $X_{i,1}, \ldots, X_{i,M} \in \mathbb{R}$ represent the observed values for each of the $M$ elements in the universe.
For any state $S$, let $S_j \in \{0, 1\}$ represent whether or not the $j$th element is included in the state. 
Intuitively, if $X_{i,j}$ is large ($> 0.5$) we wish to increase the probability that $S_j = 1$, and if $X_{i,j}$ is small ($< 0.5$) we wish to decrease the probability that $S_j = 1$. 
The value $\sigma$ is the standard deviation of the Gaussian noise in the simulation. 
Define $P_G(i, j)$ as the log probability ratio of $S_j$ taking the value $1$ rather than the value $0$ given $X_{i, j}$, which is equal to $\frac{1}{2 \sigma^2} \left( X_{i,j}^2 - (1 - X_{i,j})^2 \right)$ since $X_{i,j}$ comes from a Gaussian with standard deviation $\sigma$ centered at either $0$ or $1$.
Given the state $S$ and model parameters $\theta$ let $P_O(S, j, \theta)$ be the logits proportional to the log probability of the next action being adding the element $j$ to the set $S$. 
The new off-policy logits (proportional to the log probability of the next action adding state $j$ to set $S$) are then $P_O(S, j, \theta) + P_G(i, j)$.  
The softmax operation then converts these logits to probabilities. 
Intuitively, the probability of adding the element $j$ to the set $S$ increases proportionally to $P_G(i, j)$, which is the log probability ratio that $S_j = 1$ rather than $S_j = 0$ given $X_{i, j}$.

%\subsection{Method implementation details on simulations}
%\label{app:methodSim}

\subsection{Generative Model for RNA Splicing}
\label{app:splicingGen}

Junctions are defined as the connection between distinct exons in an isoform and can be represented by tuples containing the ending genomic position of one exon and the starting genomic position of the following exon.
For instance, if an isoform contains the exon ranging from the genomic position $1000$ to $1100$ followed by the exon ranging from genomic position $2000$ to $2100$, then the junction connecting these exons can be represented by $(1100, 2000)$.
We choose to represent isoforms in terms of sequences of junction tuples.
Note that the GTEx read count data with stable genomic positions is only publicly available in terms of the number of reads covering each junction. 
The exact nucleotides in an isoform can be precisely defined in terms of a sequence of junctions, with the minor exception of the exact length of the first and last exons. 
We model isoforms via a generative process, starting with an empty isoform and iteratively selecting junctions to include in the isoform. 
Since human data on GTEx has known possible exons, we allow our model to select its next junction $j$ if the start position of this junction $j$ and the end position of the previously selected junction $j'$ define a known exon. 
For species without known exons, a reasonable approximation of this is to allow the model to select its next junction $j$ if the start position of $j$ is within a certain number of nucleotides (corresponding to a reasonable exon length) of the end of the previous junction. 
The first junction is allowed to be any junction starting in the end position of any starting exon in the GENCODE v39 annotation. 
The isoform is allowed to end on a junction if the junction's ending position is the start position of any ending exon in the GENCODE v39 annotation. 
For species without known exons, one could approximate by allowing starting and ending positions within a certain distance of the lowest and highest genomic positions observed in the reads mapped to the relevant gene. 
The probability of each of the possible junctions at each step is determined by a neural network from an input of a one-hot encoding of the junctions already selected in the isoform. 
The neural network is specifically a two-layer neural network with $50$ hidden neurons, and a leakyReLU non-linearity.

To avoid ambiguity, we use the term \emph{sequencing sample} to refer to a specific sequencing aliquot on GTEx, which is the unit with individual read count data.
This is distinct from tissue samples from which one can derive multiple sequencing aliquots, and distinct from our policy learning sampling procedure. 
After generating the isoform, we select the probability of this isoform for each sequencing sample in the dataset, and the probability of observing reads from each junction in the isoform. 
Selecting the sequencing sample is performed using neural network with the same architecture as is used for constructing the isoform. 
Selecting the probability of reads from each junction is done by applying a learned sequencing bias vector with one log probability for each junction. 
The logit for each junction in the isoform is set to this sequencing bias value, and the logit for each junction not in the isoform is set to approximately negative infinity (technically, the bias minus $500$, corresponding to a probability below $e^{-500}$ to avoid numerical issues).
The log probability of each junction is then calculated via a log softmax. 

Although isoforms are sampled according to the policy, off-policy sampling is used for selecting the sequencing sample and junction that the read comes from.
Specifically, we simultaneously generate all possible sequencing samples and junctions in order to reduce reward variance during training. 
We therefore use the importance sampling correction term in our loss function.

\subsection{RNA Splicing Error Calculation}
\label{app:splicingError}

%REmember tissue sample va sequencing sample!!!

We calculate the isoform prediction error using the Jaccard distance between isoforms as well as optimal transport to account for isoform proportions. 
This section will first describe the general calculation together with an illustrative example, before moving on to technical details for the GTEx dataset. 
The Jaccard index for measuring similarity between two sets is defined as one minus the size of the intersection of the two sets divided by the size of the union of the two sets. 
The Jaccard distance is then one minus the Jaccard index. 
For genomic regions, the Jaccard distance is then one minus the ratio of the number of nucleotides in the intersection of the two regions to the number of nucleotides in the union of the two regions. 
A hypothetical example is shown in \cref{fig:JaccardExample}a.
In this example, isoform $1$ consists of an exon from $100$bp to $200$bp, an exon from $300$bp to $400$bp, and an exon from $500$bp to $550$bp, while isoform $2$ consists of an exon from $150$bp to $250$bp and an exon from $300$bp to $400$bp.
Then, the intersection is $150$bp to $200$bp as well as $300$bp to $400$bp, totaling $150$ nucleotides.
The union is $100$bp to $250$bp, $300$bp to $400$bp, and $500$bp to $550$bp, totaling $300$bp.
Thus, the Jaccard distance in this example is $1 - (150/300) = 0.5$.
For each tissue sample, there can be multiple isoforms with different proportions detected by long-read sequencing, and multiple isoforms with different proportions predicted from short-read sequencing by either GReinSS or RSEM. 
\cref{fig:JaccardExample}b shows a hypothetical example where long-read sequencing detects $80\%$ proportion to isoform $1$ and $20\%$ proportion to isoform $2$, whereas the prediction assigns $60\%$ proportion to isoform $1$ and $40\%$ proportion to isoform $2$.
To account for the proportions of each isoform on each sample, we calculate a Jaccard distance matrix between all pairs of predicted isoforms and long-read detected isoforms, and then apply optimal transport to these proportions. 
This is illustrated for our hypothetical example in \cref{fig:JaccardExample}c.
Specifically, the distance matrix contains $0.5$ as the distance between isoform $1$ and isoform $2$, and contains the distance $0$ between each isoform and itself. 
Optimal transport then assigns $20\%$ proportion to the prediction and long-read agreeing on isoform $1$, $60\%$ proportion to the prediction and long-read agreeing on isoform $2$, and $20\%$ proportion to the prediction having isoform $2$ but long-read having isoform $1$. 
The $20\%$ proportion of the prediction having isoform $2$ but long-read having isoform $1$ results in an error of $20\%$ of the Jaccard distance between isoform $1$ and isoform $2$, namely $0.1 = 0.5 \cdot 20\%$. 
In general, optimal transport yields a correspondence between predicted isoforms and long-read isoforms that minimizes the error. 
As a special case, if there is only one long-read detected isoform and only one predicted isoform, optimal transport simplifies to the Jaccard distance between these two isoforms.
After calculating errors on each tissue sample with optimal transport, we calculate a weighted average of errors across tissue samples.
Specifically, each sample is weighted by the long-read sequencing read count (of reads covering whole isoforms).
This weighting is essential since some tissue samples have very few or even zero long-read sequencing reads (covering whole isoforms).

%We then calculate the Jaccard distance between all pairs of predicted isoforms and the isoforms detected in the long-read sequencing to form a distance matrix. 
%For each tissue sample, we define the distribution over isoforms to be the proportion of each isoform summing to $1$.
%We then use the Jaccard distance matrix together with optimal transport to calculate the total distance between the predicted isoform distribution and the long-read sequencing estimated isoforms. 
%Finally, a weighted average is calculated across tissue samples, with the weights defined by the long read tissue read counts (of reads fully covering isoforms).
%This weighting is essential since some tissue samples have very few or even zero long-read sequencing reads. 

%We calculate the isoform prediction error by (1) computing isoform proportions on matched tissue samples, (2) calculating the Jaccard distance between all predicted and true isoforms, (3) applying optimal transport given the Jaccard distance matrix, and (4) taking a weighted average across samples. 
\paragraph{Additional details:}
Long read sequencing directly detects entire isoforms through reads that cover entire isoforms.
Consequently, we filter for reads covering entire isoforms when calculating long-read sequencing read counts.
%To calculate isoform prediction errors, we require samples with both short-read and long-read sequencing. 
As another technical detail, the GTEx database can have multiple sequencing aliquots for each tissue sample that have their own RNA sequencing data.
%We must account for the fact that tissue samples in the GTEx database can have multiple sequencing aliquots that have their own RNA sequencing data.
To compare isoform estimations between short-read sequencing data and long-read sequencing data, we first pool the predicted isoform quantities across sequencing aliquots for each tissue sample. 
Then, for the same set of $61$ tissue samples, we have isoforms predicted from short-read sequencing data from GReinSS and RSEM, as well as isoforms quantified from long-read sequencing using FLAIR. 
As discussed in Section~\ref{app:splicingGen}, we define isoforms in terms of their list of junctions. 
Therefore, we define the Jaccard distance with the region of an isoform defined as including all exons between the first and last junctions (i.e., only excluding the starting and ending exons themselves).
We then calculate the Jaccard distance between all pairs of predicted isoforms and the isoforms estimated from long-read sequencing to form a distance matrix used in the optimal transport error calculation.

\begin{figure*}[t]
\vskip 0.2in
\centering
\centerline{\includegraphics[width=\textwidth]{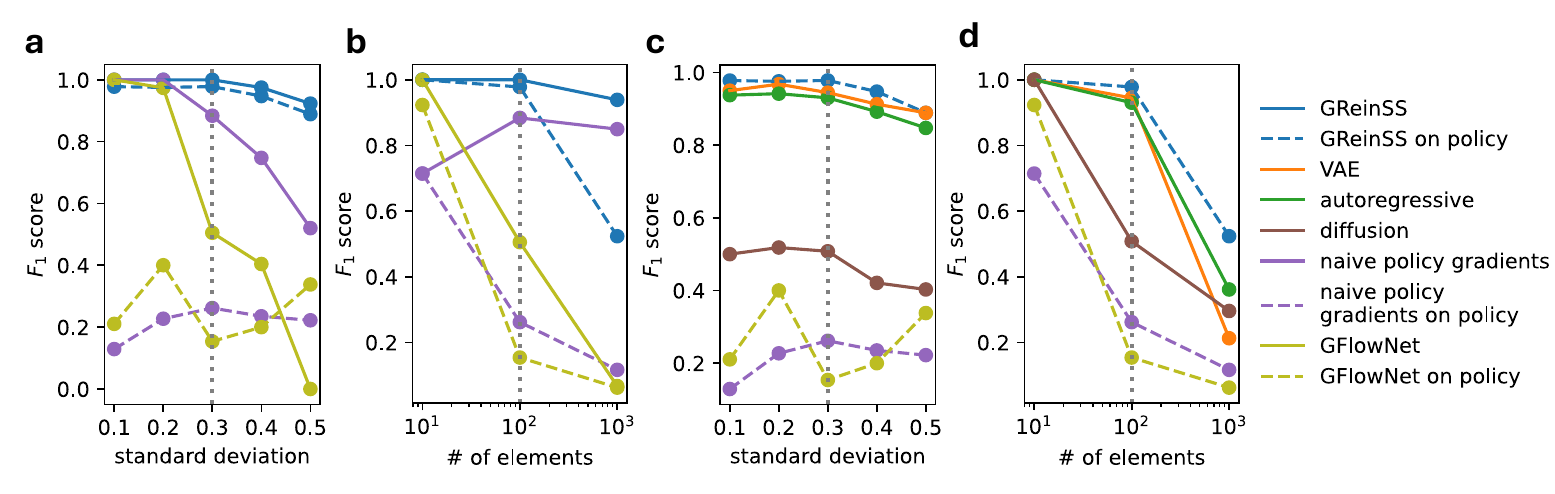}}
\caption{ \textbf{Ablation of policy learning methods on the set reconstruction simulation to remove off-policy sampling.} 
\textbf{a-b} Policy learning methods consistently achieve a higher performance when off-policy sampling is used. Panel \textbf{a} has varying standard deviations $\sigma$ and panel \textbf{b} has varying sizes of the universe of elements.
\textbf{c-d} Off-policy GReinSS consistently outperforms all off-policy and non-RL machine learning baselines. 
Panel \textbf{c} has varying standard deviations $\sigma$ and panel \textbf{d} has varying sizes of the universe of elements. 
%All machine learning methods are compared with all methods sampling states from $\Pr(s \mid \theta)$ rather than using off-policy sampling. 
%Without off-policy sampling, the policy learning baselines of naive policy gradients and GFlowNets consistently have the lowest performance. 
%\textbf{a}~GReinSS achieves the highest median $F_1$ score for all values of $\sigma$. 
%Variational autoencoders and autoregressive models also maintain high performance, while all other baselines have a $F_1$ score below $0.6$
%\textbf{b}~GReinSS achieves the highest median $F_1$ score for all sizes of binary vectors. 
%\mek{[Add additional panels showing difference between on and off policy for each RL method]}
}
\label{fig:onPolicy}
%\end{center}
\vskip -0.2in
\end{figure*}

\begin{figure*}[t]
\vskip 0.2in
\centering
\centerline{\includegraphics[width=0.7\textwidth]{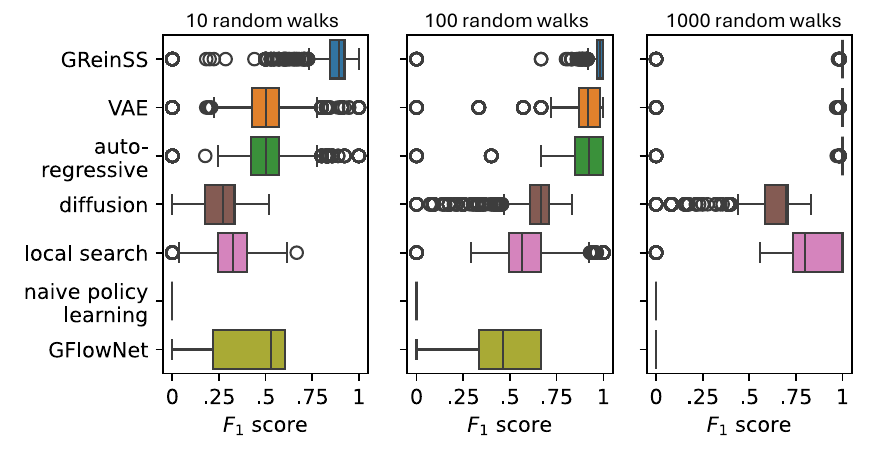}}
\caption{ \textbf{The distribution of $F_1$ scores of each method on graph inference simulations with $10$, $100$ and $1000$ random walks per observation.} 
The difficulty increases for smaller numbers of random walks, resulting in a larger advantage for GReinSS. 
}
\label{fig:graphWalks}
%\end{center}
\vskip -0.2in
\end{figure*}

\begin{figure*}[t]
\vskip 0.2in
\centering
\centerline{\includegraphics[width=0.5\textwidth]{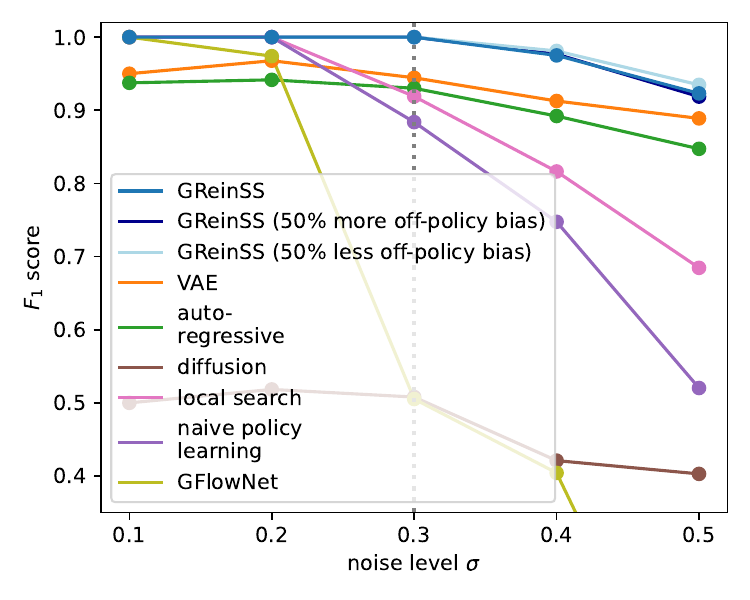}}
\caption{ \textbf{GReinSS is not sensitive to changes in the off-policy sampling proposal.} 
We tested GReinSS's sensitivity to modifying the off-policy sampling proposal by artificially increasing and decreasing the strength of the sampling bias on the action logits by $50\%$.
There was very little impact on the $F_1$ score (with a maximum decrease of  $0.0044$ for any $\sigma$ value), with GReinSS remaining the top performing method. 
}
\label{fig:sensitivity}
%\end{center}
\vskip -0.2in
\end{figure*}

\begin{figure*}[t]
\vskip 0.2in
\centering
\centerline{\includegraphics[width=\textwidth]{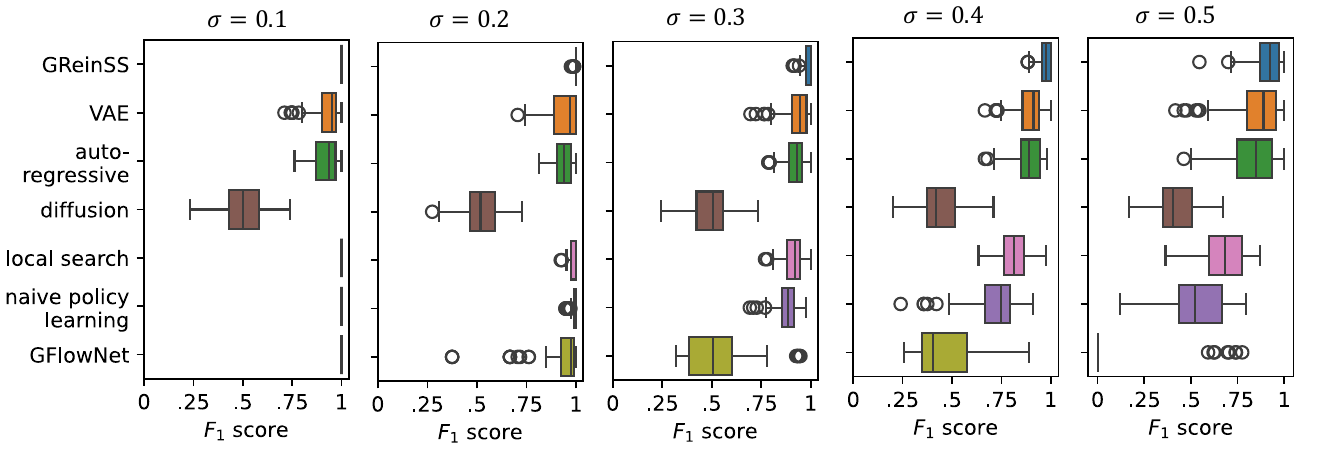}}
\caption{ \textbf{The distribution of $F_1$ scores of each method on set inference simulations with standard deviations $\sigma$ set to $0.1$, $0.2$, $0.3$, $0.4$, and $0.5$. }
The difficulty increases as the variance increases, but GReinSS maintains the top performance. 
}
\label{fig:setSigma}
%\end{center}
\vskip -0.2in
\end{figure*}

\begin{figure*}[t]
\vskip 0.2in
\centering
\centerline{\includegraphics[width=0.7\textwidth]{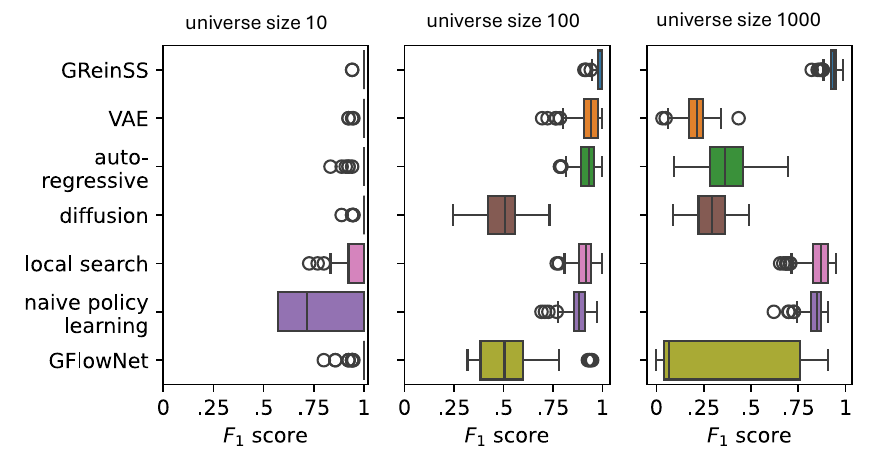}}
\caption{ \textbf{The distribution of $F_1$ scores of each method on set inference simulations with the size of the universe of elements set to $10$, $100$ and $1000$.}
The difficulty increases as the universe of elements increases in size, but GReinSS maintains the top performance. 
}
\label{fig:setLength}
%\end{center}
\vskip -0.2in
\end{figure*}

\begin{figure*}[t]
\vskip 0.2in
\centering
\centerline{\includegraphics[width=\textwidth]{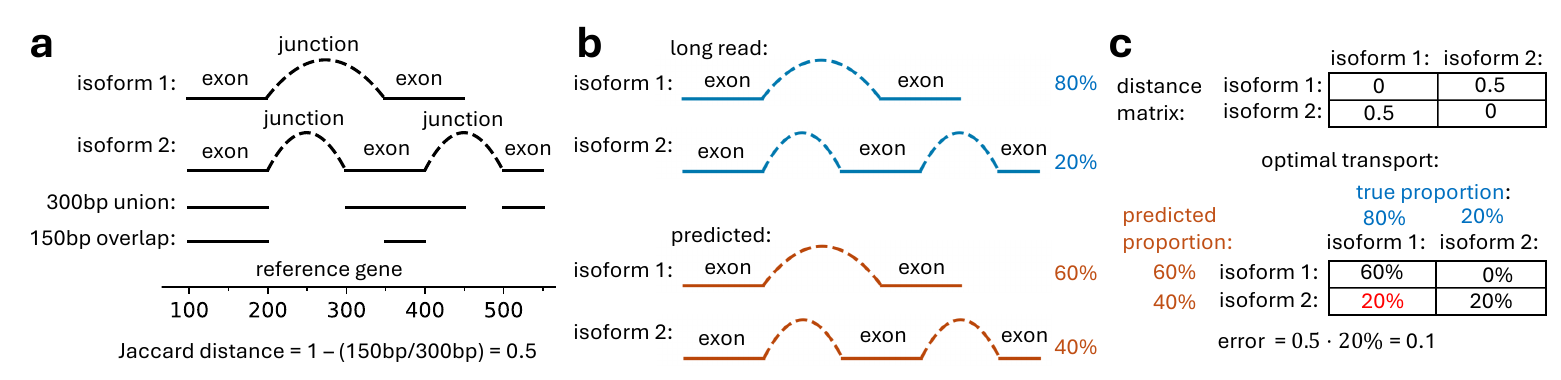}}
\caption{ \textbf{A hypothetical example of calculating the isoform prediction error.}
\textbf{a}~Isoform $1$ consists of an exon from $100$bp to $200$bp and an exon from $350$bp to $450$bp. 
Isoform $2$ consists of an exon from $100$bp to $200$bp, an exon from $300$bp to $400$bp, and an exon from $500$bp to $550$bp.
The intersection is $150$bp, and the union is $300$bp, resulting in a Jaccard distance of $1 - (150/300) = 0.5$. 
\textbf{b}~In this example, the long-read sequencing has isoform $1$ with proportion $80\%$ and isoform $2$ with proportion $20\%$, while the prediction has isoform $1$ with proportion $60\%$ and isoform $2$ with proportion $40\%$.
\textbf{c}~The error is calculated using optimal transport. 
Specifically, the prediction has $20\%$ more of isoform $2$ that should be predicted as isoform $1$, resulting in an error of $20\%$ of the Jaccard distance of $0.5$ between isoform $1$ and $2$, namely $0.1$.
%Both the long read and 
%Both the long read proportions and the predicted proportions have at least $60$\% isoform 1 and at least $20$\% isoform 2.
%However, the prediction has an additional $20\%$ of isoform 2, whereas the long read has an additional $20\%$ for isoform 1
%Then, the intersection is $150$bp to $200$bp as well as $300$ to $400$bp, totaling $150$ nucleotides.
}
\label{fig:JaccardExample}
%\end{center}
\vskip -0.2in
\end{figure*}

\begin{figure*}[t]
\vskip 0.2in
\centering
\centerline{\includegraphics[width=0.5\textwidth]{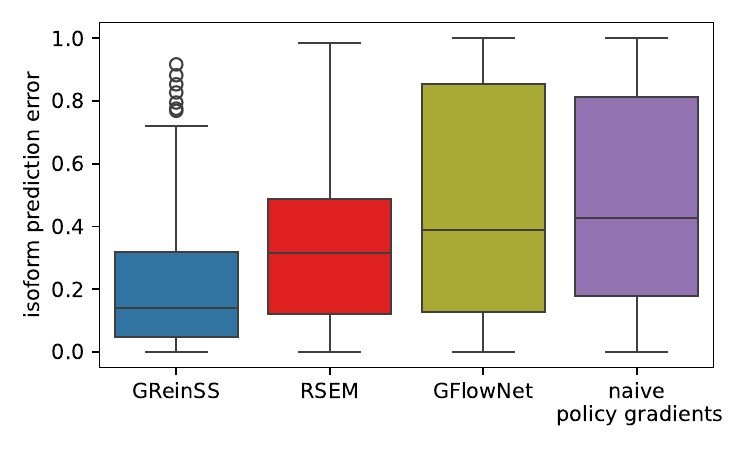}}
\caption{ \textbf{GReinSS outperforms GFlowNets and naive policy gradients in the isoform reconstruction task.}
GFlowNets and naive policy gradients were run on $100$ random genes for the RNA isoform reconstruction task. 
For these genes, the median isoform prediction error is $0.141$, $0.317$, $0.388$, and $0.427$ for GReinSS, RSEM, GFlowNets, and naive policy gradients, respectively.
The poor performance of GFlowNets and naive policy gradients clearly indicates that a neural network reparameterization of isoform generation is insufficient to achieve high accuracy, and the GReinSS approach is necessary to achieve this improvement. }
\label{fig:RNAablation}
\vskip -0.2in
\end{figure*}

\begin{figure*}[t]
\vskip 0.2in
\centering
\centerline{\includegraphics[width=0.5\textwidth]{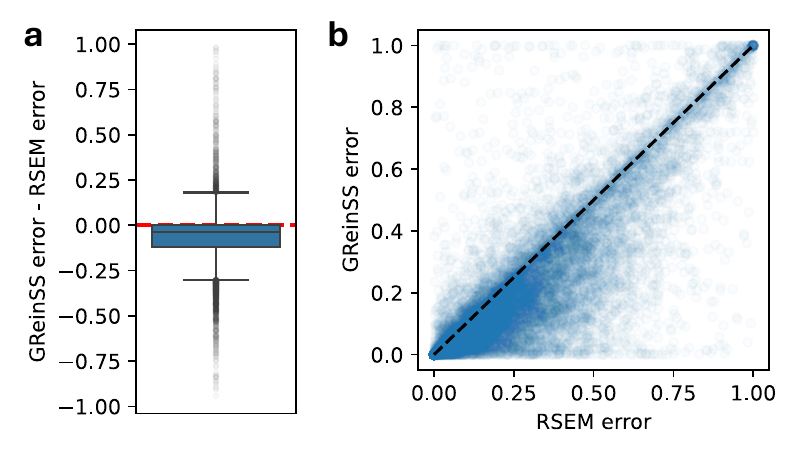}}
\caption{ \textbf{Additional comparisons of GReinSS and RSEM errors in isoform proportion prediction.}
\textbf{a}~The full boxplot of the GReinSS error minus the RSEM error, including all outliers.
\textbf{b}~A scatterplot showing the GReinSS error and RSEM error for all genes. 
}
\label{fig:ExtraSpliceError}
%\end{center}
\vskip -0.2in
\end{figure*}

\end{document}